\documentclass{article} 
\usepackage{collas2023_conference,times}
\usepackage{easyReview}


\usepackage{amsmath,amsfonts,bm}









\def\eqref#1{equation~\ref{#1}}









\def\1{\bm{1}}










\DeclareMathAlphabet{\mathsfit}{\encodingdefault}{\sfdefault}{m}{sl}
\SetMathAlphabet{\mathsfit}{bold}{\encodingdefault}{\sfdefault}{bx}{n}













\usepackage{hyperref}
\hypersetup{
    colorlinks=true,
    linkcolor=red,
    filecolor=magenta,
    urlcolor=blue,
    citecolor=purple,
    pdftitle={Overleaf Example},
    pdfpagemode=FullScreen,
    }

\usepackage{algorithm}
\usepackage{algpseudocode}
\usepackage{amsmath}
\usepackage{amssymb}
\usepackage{mathtools}
\usepackage{amsthm}
\usepackage{bm}
\theoremstyle{plain}
\newtheorem{definition}{Definition}
\newtheorem{theorem}{Theorem}

\newtheorem{apptheorem}{Theorem}
\newtheorem{applemma}{Lemma}
\usepackage{wrapfig}
\usepackage{dirtytalk}

\allowdisplaybreaks

\usepackage{subcaption}
\usepackage[belowskip=-10pt,aboveskip=15pt]{caption}

\title{Minimal Value-Equivalent Partial Models for Scalable and Robust Planning in \\ Lifelong Reinforcement Learning}


\author{Safa Alver \\
Mila, McGill University \\
Montreal, QC, Canada \\
\texttt{safa.alver@mail.mcgill.ca} \\
\And 
Doina Precup  \\
Mila, McGill University and Google DeepMind \\
Montreal, QC, Canada \\
\texttt{dprecup@cs.mcgill.ca} \\
}

%

\collasfinalcopy 


\begin{document}

\maketitle

\begin{abstract}
Learning models of the environment from pure interaction is often considered an essential component of building lifelong reinforcement learning agents. However, the common practice in model-based reinforcement learning is to learn models that model every aspect of the agent's environment, regardless of whether they are important in coming up with optimal decisions or not. In this paper, we argue that such models are not particularly well-suited for performing scalable and robust planning in lifelong reinforcement learning scenarios and we propose new kinds of models that only model the relevant aspects of the environment, which we call \emph{minimal value-equivalent partial models}. After providing a formal definition for these models, we provide theoretical results demonstrating the scalability advantages of performing planning with such models and then perform experiments to empirically illustrate our theoretical results. Then, we provide some useful heuristics on how to learn these kinds of models with deep learning architectures and empirically demonstrate that models learned in such a way can allow for performing planning that is robust to distribution shifts and compounding model errors. Overall, both our theoretical and empirical results suggest that minimal value-equivalent partial models can provide significant benefits to performing scalable and robust planning in lifelong reinforcement learning scenarios. 
\end{abstract}

\section{Introduction}
\label{sec:intro}

It has long been argued that in order for reinforcement learning (RL) agents to perform well in lifelong RL (LRL) scenarios, they should be able to learn a model of their environment, which allows for advanced computational abilities such as counterfactual reasoning and fast re-planning \citep{sutton2018reinforcement, schaul2018barbados, sutton2022alberta}. Even though this is a widely accepted view in the RL community, the question of what \emph{kinds} of models would better suite for performing LRL still remains unanswered. As LRL scenarios involve large environments with lots of irrelevant aspects and environments with unexpected distribution shifts, directly applying the ideas developed in the classical model-based RL literature \citep[see e.g., Ch.\ 8 of][]{sutton2018reinforcement} to these scenarios is likely to lead to catastrophic results in building scalable and robust lifelong learning agents. Thus, there is a need to rethink some of the ideas developed in the classical model-based RL literature while developing new concepts and algorithms for performing model-based RL in LRL scenarios.

In this paper, we argue that one important idea to reconsider is whether if the agent's model should model every aspect of its environment. In classical model-based RL, the learned model is a model over every aspect of the environment. However, due to the large state spaces of LRL environments, these types of models are likely to lead to serious problems in performing scalable model-based RL, i.e., in quickly learning a model and in quickly performing planning with the learned model to come up with an optimal or close-to-optimal policy. Also, due to the inherent non-stationarity of LRL environments, these types of detailed models are likely to lead to models that overfit to the irrelevant aspects of the environment and cause serious problems in performing robust model-based RL, i.e., learning \& planning with models that are robust to distributions shifts and compounding model errors.

To this end, we argue that models that \emph{only} model the relevant aspects of the agent's environment, which we call \emph{minimal value-equivalent partial models}, would be better suited for performing model-based RL in LRL scenarios. We first start by developing the theoretical underpinnings of how such models could be defined and studied in model-based RL (Sec.\ \ref{sec:ve_partial_models}). Then, we provide theoretical results demonstrating the scalability advantages, i.e., the value / planning loss and computational / sample complexity advantages, of performing planning with minimal value-equivalent partial models (Sec.\ \ref{sec:theory}) and then perform several experiments to empirically illustrate these theoretical results (Sec.\ \ref{sec:scalability_exps}). Finally, we provide some useful heuristics on how to learn these kinds models with deep learning architectures and empirically demonstrate that models learned in such a way can allow for performing planning that is robust to distribution shifts and compounding model errors (Sec.\ \ref{sec:robustness_exps}). Overall, both our theoretical and empirical results suggest that minimal value-equivalent partial models can provide significant benefits to performing scalable and robust model-based RL in LRL scenarios. We hope that our study will bring the RL community a step closer to building model-based RL agents that are able to perform well in LRL scenarios.

\section{Background}
\label{sec:background}

In RL \citep{sutton2018reinforcement}, an agent interacts with its environment through a sequence of actions to maximize its long-term cumulative reward. Here, the environment is usually described as a Markov decision process (MDP) $M \equiv (\mathcal{S}, \mathcal{A}, P, R, \gamma)$, where $\mathcal{S}$ and $\mathcal{A}$ are the (finite) set of states and actions, $P:\mathcal{S}\times \mathcal{A}\times \mathcal{S}\to [0, 1]$ is the transition distribution, $R:\mathcal{S}\times \mathcal{A}\to [0, R_{\max} ]$ is the reward function, and $\gamma\in [0,1)$ is the discount factor. On the agent's side, through the use of a perfect state encoder $\phi^*:\mathcal{S}\to\mathcal{F}$, every state $s\in\mathcal{S}$ can be represented, without any loss of information, as an $n$-dimensional feature vector $f = [f_1, f_2, \dots, f_n]^\top \in\mathcal{F}$, which consists of $n$ different features $\mathbb{F} = \{ f_i \}_{i=1}^n$ where each $f_i$ takes values from a finite set of values (also see \citet{boutilier2000stochastic}). Note that as there is no loss of information, $\mathbb{F}$ contains all the possible features that are relevant in describing the states of the environment. Thus, from the agent's side, the MDP $M$ can losslessly be represented as another MDP $m^*=(\mathcal{F}, \mathcal{A}, p^*, r^*, \gamma)$, where $\mathcal{F}$ and $\mathcal{A}$ are the (finite) set of feature vectors and actions, $p^*:\mathcal{F}\times \mathcal{A}\times \mathcal{F}\to [0, 1]$ and $r^*:\mathcal{F}\times \mathcal{A}\to [0, R_{\max} ]$ are the transition distribution and reward function, and $\gamma\in [0,1)$ is the discount factor. For convenience, we take the agent's view and refer to the environment as $m^*$ throughout this study. The goal of the agent is to then learn a value estimator $Q:\mathcal{F}\times\mathcal{A}\to \mathbb{R}$ that induces a policy $\pi \in \Pi \equiv \{\pi \mid \pi:\mathcal{F}\times\mathcal{A}\to [0,1] \}$, maximizing $E_{\pi, p^*} [\sum_{t=0}^\infty \gamma^t r^*(F_t, A_t) \mid F_0]$ for all $F_0\in \mathcal{F}$.

\textbf{Model-Based RL.} One of the prevalent ways of achieving this goal is through the use of model-based RL methods in which there are two main phases: the learning and planning phases. In the learning phase, the gathered experience is mainly used in learning an encoder $\phi:\mathcal{S}\to\mathcal{F}$ and a model $m\equiv (p, r)\in \mathcal{M} \equiv \{(p, r) \mid p:\mathcal{F}\times\mathcal{A}\times\mathcal{F}\to [0,1], r:\mathcal{F}\times\mathcal{A}\to [0, R_{\max} ]\}$, and optionally, the experience may also be used in improving the value estimator. In the planning phase, the learned model $m$ is then used either for solving for the fixed point of a system of Bellman equations \citep{bellman1957dp}, or for simulating experience, either to be used alongside real experience in improving the value estimator, or just to be used in selecting actions at decision time \citep{sutton2018reinforcement, alver2022understanding}.

\textbf{Lifelong RL.} Even though there has been various (informal or formal) descriptions of the LRL setup in the literature \citep[see e.g.,][]{schaul2018barbados, sutton2022alberta, brunskill2014pac, isele2016using}, in this study, we mainly focus on overcoming two of the main challenges that are present in this setup: (i) performing scalable planning in large environments with lots of irrelevant aspects and (ii) performing robust planning in environments with inherent distribution shifts.

\textbf{Value-Equivalence.} One of the recent trends in model-based RL is to learn models that are specifically useful for value-based planning \citep[see e.g.,][]{silver2017predictron, schrittwieser2020mastering}, which has been recently formalized in several different ways through the studies of \citet{grimm2020value, grimm2021proper}. Inspired by \citet{grimm2021proper}, we define a related form of value-equivalence as follows. Let $V_{\bar{m}}^{\bar{\bar{\pi}}}\in\mathbb{R}^{|\mathcal{\bar{F}}|}$ be the value vector of a policy $\bar{\bar{\pi}}\in\bar{\bar{\Pi}} \equiv \{\bar{\bar{\pi}} \mid \bar{\bar{\pi}}:\bar{\bar{\mathcal{F}}}\times\mathcal{A}\to [0,1] \}$ evaluated in model $\bar{m}\in\bar{\mathcal{M}} \equiv \{(\bar{p}, \bar{r}) \mid \bar{p}:\bar{\mathcal{F}}\times\mathcal{A}\times\bar{\mathcal{F}}\to [0,1], \bar{r}:\bar{\mathcal{F}}\times\mathcal{A}\to [0, R_{\max} ]\}$, whose elements are defined $\forall \bar{f}\in\bar{\mathcal{F}}$ as $V_{\bar{m}}^{\bar{\bar{\pi}}}(\bar{f}) \equiv E_{\bar{\bar{\pi}}, \bar{p}} \left[ \sum_{t=0}^\infty \gamma^t \bar{r}(\bar{F}_t, A_t) \vert \bar{F}_0 = \bar{f} \right]$,
and let $V_{\bar{m}}^*\in\mathbb{R}^{|\bar{\mathcal{F}}|}$ be the optimal value vector in model $\bar{m}$. Note that the value vector $V_{\bar{m}}^{\bar{\bar{\pi}}}$ and the policy $\bar{\bar{\pi}}$ may not be defined over the same feature vector space, i.e.\ $\bar{\mathcal{F}}$ and $\bar{\bar{\mathcal{F}}}$ may not be the same. In these scenarios we assume access to a mapping $\varphi:\bar{\mathcal{F}}\to\bar{\bar{\mathcal{F}}}$ of the agent so that $\bar{\bar{\pi}}$ can be evaluated in $\bar{m}$. We say that a model $m_{\textit{VE}}\in\bar{\bar{\mathcal{M}}} \equiv \{(\bar{\bar{p}}, \bar{\bar{r}}) \mid \bar{\bar{p}}:\bar{\bar{\mathcal{F}}}\times\mathcal{A}\times\bar{\bar{\mathcal{F}}}\to [0,1], \bar{\bar{r}}:\bar{\bar{\mathcal{F}}}\times\mathcal{A}\to [0, R_{\max} ]\}$ is a \emph{value-equivalent (VE) model} of the true environment $m^*\in\mathcal{M}$ if the following equality holds:
\begin{equation}
    V_{m^*}^{\pi_{m_{\textit{VE}}}^*} = V_{m^*}^* \quad \forall \pi_{m_{\textit{VE}}}^*\in\bar{\bar{{\Pi}}},
\end{equation}
where $\pi_{m_{\textit{VE}}}^*$ is an optimal policy obtained as a result of planning with model $m_{\textit{VE}}$.

\section{Minimal Value-Equivalent Partial Models}
\label{sec:ve_partial_models}

In classical model-based RL \citep[Ch.\ 8 of][]{sutton2018reinforcement}, an agent learns a very detailed model of its environment that models every aspect of it, regardless of whether these aspects are relevant in the process of coming up with optimal decisions or not. However, in LRL scenarios, where the agent is ``small'' and the environment is ``vast'' \citep{schaul2018barbados}, this approach is likely to be problematic as modeling every aspect of the environment becomes quite impractical. Even if the agent overcomes its capacity limitations and manages to model every aspect, as we will demonstrate, these kinds of detailed models can lead to large planning losses and dramatically slowdown both the model-learning and planning processes. And, as we will further demonstrate, detailed models can also be fragile to the distribution shifts in the environment and to the compounding model errors that happen during the unrollment of the learned model. In order to overcome these challenges, we start by proposing new kinds of models that only model certain aspects, either relevant or irrelevant, of the agent's environment. For this, we first start by clarifying the notion of ``aspect'': in this study, by ``aspect'', we mean a feature of the environment $f_i\in\mathcal{F}_i$ that is learnable by the agent (see Sec.\ \ref{sec:background}). We are now ready to define \emph{partial models}:

\begin{definition}[Partial Models] \label{def:partial_models}
Given a set of features $\mathbb{F}$, let $\mathbb{F}_{\text{P}}\subset \mathbb{F}$ s.t.\ $|\mathbb{F}_{\text{P}}| < |\mathbb{F}|$. Let $\mathcal{F}_{\text{P}}$ be a space of feature vectors in which the feature vectors consist the features in $\mathbb{F}_{\text{P}}$. We say that a model $m_{\text{P}}$ is a \emph{partial model} of the true environment $m^*\in\mathcal{M}$ if it is defined over the feature vector space $\mathcal{F}_{\text{P}}$, i.e., $m_{\text{P}} \in \mathcal{M}_{\text{P}} \equiv \{(p_{\text{P}},r_{\text{P}}) \mid p_{\text{P}}:\mathcal{F}_{\text{P}}\times\mathcal{A}\times\mathcal{F}_{\text{P}}\to [0,1], r_{\text{P}}:\mathcal{F}_{\text{P}}\times\mathcal{A}\to [0, R_{\max}] \}$.
\end{definition}

According to Defn.\ \ref{def:partial_models}, any model that only models certain features of the environment is a partial model of the environment $m^*\in\mathcal{M}$. However, in order for a partial model to be useful, it should be able to model the relevant features of the environment, which allow for achieving the task of interest. In order to separate out the relevant features from the irrelevant ones, we define the relevant ones as follows:

\begin{definition}[Relevant Features] \label{def:relevant_features}
Given a set of features $\mathbb{F}$, let $\mathbb{F}_{\text{R}} \subset \mathbb{F}$. Let $\mathcal{F}_{\text{R}}$ be a space of feature vectors in which the feature vectors consist of the features in $\mathbb{F}_{\text{R}}$. We say that the features $f_i\in \mathbb{F}_{\text{R}}$ are \emph{relevant features} of the task of interest if they are necessary and sufficient for defining a space of models $\mathcal{M}_{\text{R}} \equiv \{(p_{\text{R}}, r_{\text{R}}) \mid p_{\text{R}}:\mathcal{F}_{\text{R}}\times\mathcal{A}\times\mathcal{F}_{\text{R}}\to [0,1], r_{\text{R}}:\mathcal{F}_{\text{R}}\times\mathcal{A}\to [0, R_{\max}] \}$ that contains value-equivalent models of the true environment $m^*\in\mathcal{M}$.
\end{definition}

Now that we have defined partial models and distinguished between the relevant and irrelevant features, we are ready to define an important class of partial models that at the very least model the relevant aspects of the environment:

\begin{definition}[VE Partial Models] \label{def:ve_partial_models}
Given a set of features $\mathbb{F}$, let $\mathbb{F}_{\text{VEP}}\subset \mathbb{F}$ s.t. $|\mathbb{F}_{\text{VEP}}| < |\mathbb{F}|$ and $\mathbb{F}_{\text{R}} \subseteq \mathbb{F}_{\text{VEP}}$. Let $\mathcal{F}_{\text{VEP}}$ be a space of feature vectors in which the feature vectors consist of the features in $\mathbb{F}_{\text{VEP}}$. Let $m_{\text{VEP}}$ be a partial model that is defined over the feature vector space $\mathcal{F}_{\text{VEP}}$, i.e., $m_{\text{VEP}}\in \mathcal{M}_{\text{VEP}} \equiv \{(p_{\text{VEP}},r_{\text{VEP}}) \mid p_{\text{VEP}}:\mathcal{F}_{\text{VEP}}\times\mathcal{A}\times\mathcal{F}_{\text{VEP}}\to [0,1], r_{\text{VEP}}:\mathcal{F}_{\text{VEP}}\times\mathcal{A}\to [0, R_{\max}] \}$. We say that $m_{\text{VEP}}$ is a \emph{VE partial model} of the true environment $m^*\in\mathcal{M}$ if it is a VE model of $m^*$, i.e.,
\begin{equation}
    V_{m^*}^{\pi_{m_{\text{VEP}}}^*} = V_{m^*}^* \quad \forall \pi_{m_{\text{VEP}}}^*\in\Pi_{\text{VEP}},
\end{equation}
where $\pi_{m_{\text{VEP}}}^*$ is an optimal policy obtained as a result of planning with model $m_{\text{VEP}}$ and $\Pi_{\text{VEP}} \equiv \{\pi \mid \pi:\mathcal{F}_{\text{VEP}}\times\mathcal{A}\to [0,1] \}$.
\end{definition}

Although it is important to learn partial models that at the very least model the relevant aspects of the environment, as we will theoretically and empirically demonstrate, partial models are mostly beneficial when they \emph{only} model the relevant aspects of the environment, i.e., when $\mathbb{F}_{\textit{VEP}} = \mathbb{F}_{\textit{R}}$. We refer to these models as \textbf{minimal VE partial models}. Note that minimal VE partial models are a special class of VE partial models, and VE partial models are a special class of partial models.

\begin{wrapfigure}{R}{0.55\textwidth}
    \vspace{-0.25cm}
    \centering
    \includegraphics[width=0.55\textwidth]{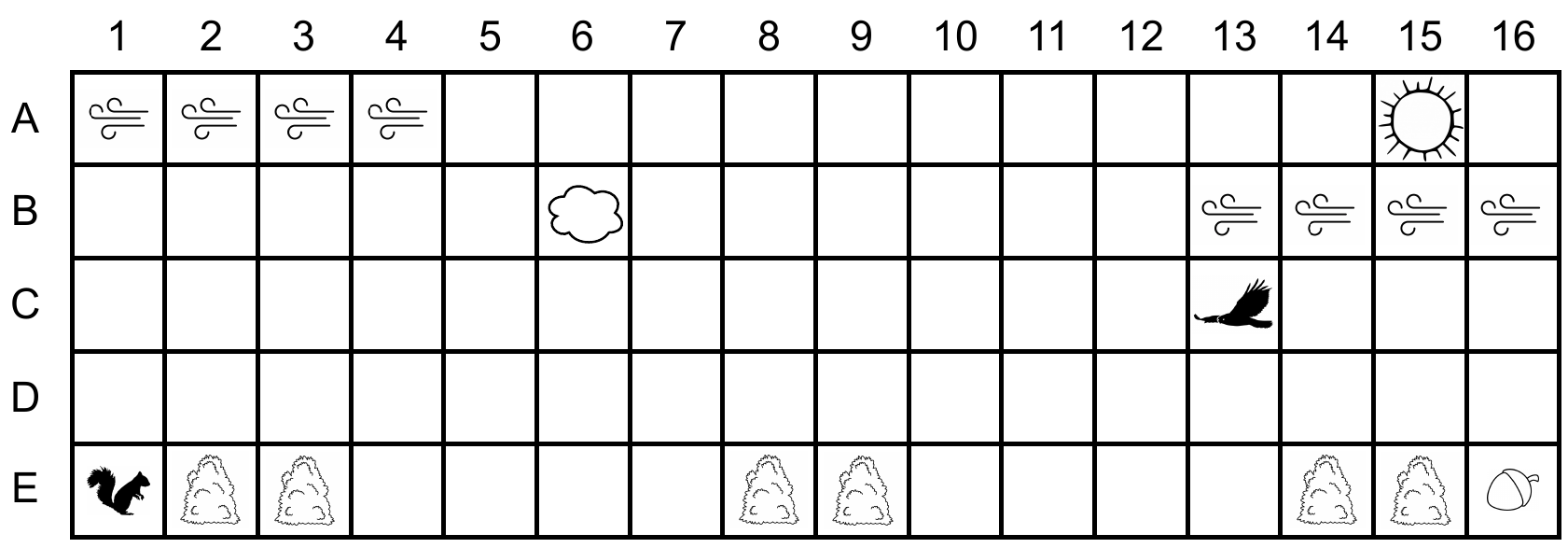}
    \vspace{-0.45cm}
    \caption{The Squirrel's World (SW) environment.}
    \label{fig:squirrels_world}
\end{wrapfigure}

\textbf{Illustrative Example.} As an illustration of the models defined above, let us start by considering the Squirrel's World (SW) environment depicted in Fig.\ \ref{fig:squirrels_world}, in which the squirrel's (the agent) job is to navigate from cell E1 to cell E16 to pickup the nut without getting caught by the hawk that flies back and forth horizontally along row C. At each time step, the squirrel receives as input an 5$\times$16 image of the current state of the environment and then, through the use of a pre-defined state encoder, transforms this image into a feature vector that contains information regarding all aspects of the current state of the environment, i.e., the feature vector contains information on the  current position of the squirrel, hawk and the cloud, the current direction of the hawk, the current wind direction in rows A and B and the current weather condition. Based on this, the squirrel selects an action that either moves it to the left or right cell, or keeps it position fixed. If the squirrel gets caught by the hawk or if it is out of time, it receives a reward of $0$ and the episode terminates, and if the squirrel successfully navigates to the nut, it gets a reward of $+10$ and the episode terminates. In this environment, as the hawk moves 5x the speed of the squirrel, a straightforward policy of always moving to the right will not get the squirrel to the nut. Thus, the squirrel has to come up with non-trivial policies that take into account both the cells with bushes (see e.g., cells E2, E3), which allow for sheltering, and the position and direction of the hawk. 

In this environment, examples of partial models can be a model that only models the cloud position and the wind direction for rows A and B, or a model that only models the weather condition and the hawk's direction. However, for a partial model to be VE or minimal VE, it has to model the relevant features for the tasks of interest which is reaching the nut. In the SW environment, there are three relevant features: (i) the squirrel's position, (ii) the hawk's position, and (iii) the hawk's direction, as the squirrel would have to have access to all three of these features to come up with optimal policies. Thus, an example of a VE partial model can be a model that models both the three relevant features and the weather condition, and an example of a minimal VE partial model can be a model that \emph{only} models the three relevant features.

\section{Theoretical Results}
\label{sec:theory}

In this section, we first analyze the value and planning losses (Sec.\ \ref{sec:value_planning_loss}) of VE partial models and then derive formal results demonstrating the computational and sample complexity benefits (Sec.\ \ref{sec:comp_sample_complexity}) of using such models. We then discuss scenarios in which the VE partial model is a minimal one.

\subsection{Value and Planning Loss Analyses}
\label{sec:value_planning_loss}

We start our formal analysis by studying the value loss incurred due to planning with a VE partial model $m_{\textit{VEP}}$ in place of the true environment $m^*$:

\begin{theorem}
\label{thm:valueloss_theorem}
Let $m_{\text{VEP}}\in \mathcal{M}_{\text{VEP}}$ be a VE partial model of the true environment $m^*\in\mathcal{M}$. Then, the value loss between an optimal policy in $m^*$, $\pi^*$, and an optimal policy in $m_{\text{VEP}}$, $\pi_{m_{\text{VEP}}}^*$ is given by:
\begin{equation}
    \left\Vert V_{m^*}^* -V_{m^*}^{\pi_{m_{\text{VEP}}}^*} \right\Vert_{\infty} = 0.
\end{equation}
\end{theorem}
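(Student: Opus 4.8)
The plan is to observe that this statement is an almost immediate consequence of Defn.\ \ref{def:ve_partial_models}, so the proof amounts to carefully unpacking the relevant definitions rather than establishing a nontrivial estimate. Recall that $m_{\text{VEP}}$ being a VE partial model means precisely that any optimal policy $\pi_{m_{\text{VEP}}}^*$ produced by planning in $m_{\text{VEP}}$, once evaluated back in the true environment $m^*$, attains the optimal value vector, i.e.\ $V_{m^*}^{\pi_{m_{\text{VEP}}}^*} = V_{m^*}^*$. The desired claim is then just the assertion that the infinity norm of the difference of these two (equal) vectors vanishes.

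Concretely, I would first fix an arbitrary optimal policy $\pi_{m_{\text{VEP}}}^*$ of the partial model and note that, since $\pi_{m_{\text{VEP}}}^*$ is defined over $\mathcal{F}_{\text{VEP}}$ while $m^*$ is defined over the larger space $\mathcal{F}$, its evaluation in $m^*$ is well-defined only through the mapping $\varphi$ introduced in Sec.\ \ref{sec:background}, which lifts a policy over the partial feature vector space to one over the full feature vector space. This is the single point that requires care: I must confirm that the object $V_{m^*}^{\pi_{m_{\text{VEP}}}^*}$ appearing in the value loss is exactly the same quantity that Defn.\ \ref{def:ve_partial_models} guarantees to equal $V_{m^*}^*$, so that the two notions of ``evaluating the planned policy in the true environment'' genuinely coincide. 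Given the setup in Sec.\ \ref{sec:background}, this identification is routine.

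Once this is in place, the conclusion is immediate: substituting $V_{m^*}^{\pi_{m_{\text{VEP}}}^*} = V_{m^*}^*$ into the left-hand side yields $\Vert V_{m^*}^* - V_{m^*}^* \Vert_{\infty} = \Vert \vzero \Vert_{\infty} = 0$. Because $\pi_{m_{\text{VEP}}}^*$ was an arbitrary optimal policy of $m_{\text{VEP}}$, the identity holds uniformly over all such policies, matching the quantifier in the definition.

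The main ``obstacle'' here is conceptual rather than technical: there is no inequality to sharpen and no error term to control, since zero value loss is effectively built into the definition of value equivalence. The genuine content of this section does not reside in this theorem but in establishing that \emph{strictly} partial models (ones that drop irrelevant features) can in fact satisfy the VE condition, and in the subsequent computational and sample complexity analyses; Thm.\ \ref{thm:valueloss_theorem} simply records the trivial-but-worth-stating fact that, once value equivalence is assumed, planning with the partial model incurs no loss when its induced policy is deployed in the true environment.
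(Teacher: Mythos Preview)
Your proposal is correct and matches the paper's own proof essentially verbatim: the paper simply cites Defn.~\ref{def:ve_partial_models}, restates the equality $V_{m^*}^{\pi_{m_{\text{VEP}}}^*} = V_{m^*}^*$ for all $\pi_{m_{\text{VEP}}}^*\in\Pi_{\text{VEP}}$, and concludes that the infinity norm of the difference is zero. Your additional remark about the mapping $\varphi$ making the evaluation well-defined is a nice clarification the paper leaves implicit, but the core argument is identical.
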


Due to space constraints, we defer all the proofs to App.\ \ref{appsec:proofs}. Theorem \ref{thm:valueloss_theorem} implies that by planning with a (non-minimal or minimal) VE partial model and obtaining an optimal policy within it, an agent would incur no value loss compared to the optimal value of the environment itself.

Next, we study the planning loss \citep{jiang2015dependence} incurred due to planning with an approximate VE partial model $\tilde{m}_{\textit{VEP}}\in \mathcal{M}_{\textit{VEP}}$ in place of the actual VE partial model $m_{\textit{VEP}}\in \mathcal{M}_{\textit{VEP}}$. Similar to \citet{jiang2015dependence}, we also consider the certainty-equivalence control setting in which the agent acts according to a policy that is optimal with respect to its current approximate model.

\begin{theorem}
\label{thm:planningloss_theorem}
Let $m_{\text{VEP}}\in \mathcal{M}_{\text{VEP}}$ be a VE partial model of the true environment $m^*\in\mathcal{M}$, and let $\tilde{m}_{\text{VEP}}\in \mathcal{M}_{\text{VEP}}$ be model that comprises of the reward function of $m_{\text{VEP}}$ and a transition distribution that is estimated from $n$ samples for each $(f,a)$ pair. Let $\Pi_{r_{\text{VEP}}} \equiv \{ \pi \mid \exists \; p_{\text{VEP}} \; \text{ s.t  $\pi$  is optimal in $(p_{\text{VEP}}, r_{\text{VEP}})$} \}$. Then, certainty-equivalence planning with $\tilde{m}_{\text{VEP}}$ has planning loss:
\begin{equation}
    \left\Vert V_{m_{\text{VEP}}}^{*} - V_{m_{\text{VEP}}}^{\pi_{\tilde{m}_{\text{VEP}}}^*} \right\Vert_{\infty} \leq \frac{2 R_{\max}}{(1-\gamma)^2} \sqrt{\frac{1}{2n} \log \frac{2 |\mathcal{F}_{\text{VEP}}| |\mathcal{A}| |\Pi_{r_{\text{VEP}}}|}{\delta}},
\end{equation}
with probability at least $1-\delta$.
\end{theorem}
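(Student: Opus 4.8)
The plan is to follow the certainty-equivalence planning-loss argument of \citet{jiang2015dependence}, adapting it to the partial-model setting in which the reward function is known exactly and only the transition kernel is estimated. Write $M \equiv m_{\text{VEP}} = (p_{\text{VEP}}, r_{\text{VEP}})$, $\tilde M\equiv\tilde{m}_{\text{VEP}} = (\tilde p_{\text{VEP}}, r_{\text{VEP}})$, $\pi^* \equiv \pi^*_{m_{\text{VEP}}}$ and $\hat\pi\equiv\pi^*_{\tilde m_{\text{VEP}}}$, and abbreviate by $P^\pi_M$, $P^\pi_{\tilde M}$, $r^\pi$ the policy-induced transition matrices and (shared) reward vector. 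First I would split the planning loss via the standard three-term decomposition
\begin{equation}
V_{M}^{\pi^*} - V_{M}^{\hat\pi} = \left(V_{M}^{\pi^*} - V_{\tilde M}^{\pi^*}\right) + \left(V_{\tilde M}^{\pi^*} - V_{\tilde M}^{\hat\pi}\right) + \left(V_{\tilde M}^{\hat\pi} - V_{M}^{\hat\pi}\right),
\end{equation}
and observe that the middle bracket is at most zero, since $\hat\pi$ is optimal in $\tilde M$ and hence $V_{\tilde M}^{\hat\pi} = V_{\tilde M}^{*} \geq V_{\tilde M}^{\pi^*}$. It then remains to control the two ``cross'' terms, each of the form $V_M^\pi - V_{\tilde M}^\pi$ for a fixed policy $\pi$.

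The next step is a value-difference identity. Subtracting the Bellman equations for $\pi$ in the two models and inserting $\pm\,\gamma P^\pi_{\tilde M} V^\pi_M$ gives
\begin{equation}
V_M^\pi - V_{\tilde M}^\pi = \gamma\left(I - \gamma P^\pi_{\tilde M}\right)^{-1}\left(P^\pi_M - P^\pi_{\tilde M}\right) V_M^\pi .
\end{equation}
The crucial point is that the value vector on the right is $V_M^\pi$, the value of $\pi$ in the \emph{true} partial model, which does \emph{not} depend on the sampled transitions; the only data-dependent factor is the resolvent $(I-\gamma P^\pi_{\tilde M})^{-1}$, whose $\infty$-norm is at most $1/(1-\gamma)$ because $P^\pi_{\tilde M}$ is row-stochastic. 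Taking $\infty$-norms therefore yields
\begin{equation}
\left\Vert V_M^\pi - V_{\tilde M}^\pi\right\Vert_\infty \leq \frac{\gamma}{1-\gamma}\max_{f,a}\left| \sum_{f'}\left(p_{\text{VEP}}(f'\mid f,a) - \tilde p_{\text{VEP}}(f'\mid f,a)\right) V_M^\pi(f')\right| .
\end{equation}

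For fixed $(f,a)$ and fixed $\pi$, the inner quantity is the gap between an expectation under $p_{\text{VEP}}(\cdot\mid f,a)$ and its empirical average over the $n$ i.i.d.\ next-state samples of the bounded random variable $V_M^\pi \in [0, R_{\max}/(1-\gamma)]$, so Hoeffding's inequality bounds it by $\frac{R_{\max}}{1-\gamma}\sqrt{\frac{1}{2n}\log\frac{2}{\delta'}}$ with probability at least $1-\delta'$. The main obstacle is that $\hat\pi$ — and hence the vector $V_M^{\hat\pi}$ appearing in the third bracket — is itself a function of the very samples used to build $\tilde p_{\text{VEP}}$, so Hoeffding cannot be applied to that single policy in isolation. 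I would resolve this exactly as the presence of $\Pi_{r_{\text{VEP}}}$ in the statement suggests: both $\pi^*$ and $\hat\pi$ are optimal in some model sharing the reward $r_{\text{VEP}}$, hence both lie in $\Pi_{r_{\text{VEP}}}$, so a union bound over all $\pi\in\Pi_{r_{\text{VEP}}}$, $f\in\mathcal{F}_{\text{VEP}}$ and $a\in\mathcal{A}$ — taking $\delta' = \delta/(|\Pi_{r_{\text{VEP}}}||\mathcal{F}_{\text{VEP}}||\mathcal{A}|)$ — makes the Hoeffding bound hold simultaneously for the two data-dependent policies. Summing the two cross terms and using $\gamma\leq 1$ in the prefactor then gives the claimed bound with probability at least $1-\delta$.
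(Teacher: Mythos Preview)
Your proposal is correct and follows essentially the same route as the paper: a three-term decomposition reducing the planning loss to $2\max_{\pi\in\Pi_{r_{\text{VEP}}}}\|V_{m_{\text{VEP}}}^{\pi}-V_{\tilde m_{\text{VEP}}}^{\pi}\|_\infty$, a simulation-lemma style bound turning this into a one-step term $(p-\tilde p)V_{m_{\text{VEP}}}^{\pi}$ at the cost of a $1/(1-\gamma)$ factor, then Hoeffding plus a union bound over $\mathcal{F}_{\text{VEP}}\times\mathcal{A}\times\Pi_{r_{\text{VEP}}}$. The only cosmetic difference is that the paper packages the second step as a separate $Q$-function Bellman-residual lemma proved by telescoping, whereas you obtain the same $1/(1-\gamma)$ amplification directly from the resolvent identity $V_M^\pi-V_{\tilde M}^\pi=\gamma(I-\gamma P_{\tilde M}^\pi)^{-1}(P_M^\pi-P_{\tilde M}^\pi)V_M^\pi$; the constants and the final bound coincide.
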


Theorem \ref{thm:planningloss_theorem}, which is based on the results of \cite{jiang2015dependence}, implies that given a fixed amount of data, the upper bound of the planning loss of a VE partial model depends on both the size of its feature vector space, $|\mathcal{F}_{\textit{VEP}}|$, and the size of its policy class being searched over by planning, $|\Pi_{r_{\textit{VEP}}}|$.\footnote{Note that $|\mathcal{F}_{\textit{VEP}}|$ also affects $|\Pi_{r_{\textit{VEP}}}|$, i.e., as $|\mathcal{F}_{\textit{VEP}}|$ grows, $|\Pi_{r_{\textit{VEP}}}|$ also grows.} This in turn implies that, given a fixed amount of data, compared to a regular model, a VE partial model is likely to have less planning loss and this loss is likely to be minimized when the VE partial model is a minimal one.

\subsection{Computational and Sample Complexity Benefits}
\label{sec:comp_sample_complexity}

We now study the computational and sample complexity benefits of performing model-based RL with VE partial models. Due to the well-established theoretical results around it, we choose to study these benefits in the context of value iteration \citep{bertsekas1996neuro}. However, we note that the implications of our results would apply to a wide variety of planning algorithms.

Starting with the computational complexity benefits, it is well-known that the computational complexity of performing a single step of value iteration with an arbitrary model $m\in\mathcal{M}$ is $\mathcal{O}(|\mathcal{F}|^2 |\mathcal{A}|)$ \citep{agarwal2019reinforcementlt}. Thus, the computational complexity of performing a single step of value iteration with a VE partial model $m_{\textit{VEP}}\in\mathcal{M}_{\textit{VEP}}$ would be $\mathcal{O}(|\mathcal{F}_{\textit{VEP}}|^2 |\mathcal{A}|)$. This implies that compared to planning with regular models, planning with VE partial models would provide a significant computational complexity benefit and this benefit would be maximized when the model used for planning is a minimal VE partial model.

Moving on to the sample complexity benefits, previous studies of \citet{kearns1998finite, kakade2003sample, azar2012sample} have shown that the sample complexity of obtaining an $\varepsilon$ estimation of the optimal action value function through the use of Q-value iteration (see Alg.\ \ref{alg:alg_QVI}) given access only to a generative model, a model that provides samples for every given state-action pair, is in the order of the magnitude of the model's state and action space. Building on top of this result, we now study the sample complexity benefits of planning with approximate VE partial models that are obtained as a result of sampling generative VE partial models:

\begin{theorem}
\label{thm:sample_complexity}
Let $m_{\text{VEP}}\in \mathcal{M}_{\text{VEP}}$ be a VE partial model of the true environment $m^*\in\mathcal{M}$. Let $\tilde{m}_{\text{VEP}}\in \mathcal{M}_{\text{VEP}}$ be the corresponding approximate VE partial model that has the same reward function as $m_{\text{VEP}}$, but whose transition distribution is estimated by $m$ calls to the generative model $m_{\text{VEP}}$, where
\begin{equation}
    m = \mathcal{O} \left( \frac{|\mathcal{F}_{\text{VEP}}| |\mathcal{A}|}{(1-\gamma)^4 \varepsilon^2}  \right),
\end{equation}
and let $Q_{\tilde{m}_{\text{VEP}}}^k$ be the value returned by Q-value iteration at the $k$th epoch. Then, with probability greater than $1-\delta$, the following holds for all $f\in\mathcal{F}_{\text{VEP}}$ and $a\in\mathcal{A}$:
\begin{equation}
    \left\Vert Q_{\tilde{m}_{\text{VEP}}}^k - Q_{m_{\text{VEP}}}^* \right\Vert_{\infty} \leq \varepsilon,
\end{equation}
where $k=\frac{\log (\varepsilon (1-\gamma))}{\log \gamma}$ and $Q_{m_{\text{VEP}}}^*$ is the optimal action value function in $m_{\text{VEP}}$.
\end{theorem}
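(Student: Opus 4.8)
The plan is to recognize that the VE partial model $m_{\text{VEP}} = (\mathcal{F}_{\text{VEP}}, \mathcal{A}, p_{\text{VEP}}, r_{\text{VEP}}, \gamma)$ is itself a bona fide finite MDP, differing from the true environment only in that its effective ``state'' space is the smaller feature-vector space $\mathcal{F}_{\text{VEP}}$. Consequently, the classical generative-model sample-complexity results for Q-value iteration established by \citet{kearns1998finite, kakade2003sample, azar2012sample} apply to $m_{\text{VEP}}$ essentially verbatim, with the effective state-space cardinality taken to be $|\mathcal{F}_{\text{VEP}}|$ rather than $|\mathcal{F}|$. The entire theorem then follows by instantiating these results on $m_{\text{VEP}}$; the only conceptual check is to verify that the value-equivalence structure plays no role in the argument, since the claim concerns estimating $Q_{m_{\text{VEP}}}^*$ and never refers back to $m^*$.

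Concretely, I would first split the target error via the triangle inequality into an iteration error and a model-estimation error:
\begin{equation}
\left\Vert Q_{\tilde{m}_{\text{VEP}}}^k - Q_{m_{\text{VEP}}}^* \right\Vert_\infty \leq \left\Vert Q_{\tilde{m}_{\text{VEP}}}^k - Q_{\tilde{m}_{\text{VEP}}}^* \right\Vert_\infty + \left\Vert Q_{\tilde{m}_{\text{VEP}}}^* - Q_{m_{\text{VEP}}}^* \right\Vert_\infty.
\end{equation}
The first term is controlled purely by the choice of $k$: since the Bellman optimality operator of $\tilde{m}_{\text{VEP}}$ is a $\gamma$-contraction in the $\|\cdot\|_\infty$ norm, Q-value iteration converges geometrically, so after $k$ epochs the residual is at most $\gamma^k$ times the initial error, which is bounded by $R_{\max}/(1-\gamma)$. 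Substituting $k = \frac{\log(\varepsilon(1-\gamma))}{\log\gamma}$ makes this contribution $\mathcal{O}(\varepsilon)$, which is precisely the stated stopping rule.

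The crux of the argument is bounding the second (model-estimation) term. Here I would compare the optimal value functions of the true partial model $m_{\text{VEP}}$ and its sampled counterpart $\tilde{m}_{\text{VEP}}$, which share the reward $r_{\text{VEP}}$ and differ only in the transition kernel estimated from $m$ samples per $(f,a)$ pair. The standard approach expresses the difference $Q_{m_{\text{VEP}}}^* - Q_{\tilde{m}_{\text{VEP}}}^*$ through the per-pair estimation error $(p_{\text{VEP}} - \tilde{p}_{\text{VEP}}) V$ and bounds it by concentration. The main obstacle, and the reason the sample requirement scales as $(1-\gamma)^{-4}$ rather than the looser $(1-\gamma)^{-6}$ produced by a naive Hoeffding bound, is that one must use a Bernstein-type argument together with the total-variance technique of \citet{azar2012sample}, controlling $\sum_t \gamma^t \Var(\cdot)$ rather than bounding each term crudely by $R_{\max}/(1-\gamma)$. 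Taking a union bound over all $|\mathcal{F}_{\text{VEP}}||\mathcal{A}|$ state-action pairs produces the $\log(|\mathcal{F}_{\text{VEP}}||\mathcal{A}|/\delta)$ factor, and inverting the resulting inequality for the number of samples yields $m = \mathcal{O}\!\left(\frac{|\mathcal{F}_{\text{VEP}}||\mathcal{A}|}{(1-\gamma)^4\varepsilon^2}\right)$. With this sample size both terms are at most $\mathcal{O}(\varepsilon)$ with probability at least $1-\delta$, completing the proof.
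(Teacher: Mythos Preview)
Your high-level plan matches the paper's proof exactly: the same triangle-inequality split into an iteration term $\Vert Q_{\tilde m_{\text{VEP}}}^k - Q_{\tilde m_{\text{VEP}}}^*\Vert_\infty$ controlled by the $\gamma$-contraction and the choice of $k$, and a model-estimation term $\Vert Q_{\tilde m_{\text{VEP}}}^* - Q_{m_{\text{VEP}}}^*\Vert_\infty$ controlled by concentration plus a union bound over $|\mathcal{F}_{\text{VEP}}||\mathcal{A}|$ pairs. Your observation that value equivalence plays no role in the argument is also correct.

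The one substantive discrepancy is your treatment of the second term. You assert that a Bernstein-type inequality together with the total-variance technique of \citet{azar2012sample} is \emph{required} to reach the $(1-\gamma)^{-4}$ rate, and that a ``naive Hoeffding'' bound would give $(1-\gamma)^{-6}$. The paper does not do this: it applies plain Hoeffding to the scalar random variable $(\tilde p_{\text{VEP}}-p_{\text{VEP}})V_{m_{\text{VEP}}}^*(f,a)$, whose range is $[0,1/(1-\gamma)]$, obtaining $\Vert(\tilde p_{\text{VEP}}-p_{\text{VEP}})V_{m_{\text{VEP}}}^*\Vert_\infty \le \frac{1}{1-\gamma}\sqrt{\log(2|\mathcal{F}_{\text{VEP}}||\mathcal{A}|/\delta)/N}$. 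Combining this with the self-bounding step $\Vert Q_{\tilde m_{\text{VEP}}}^* - Q_{m_{\text{VEP}}}^*\Vert_\infty \le \frac{\gamma}{1-\gamma}\Vert(\tilde p_{\text{VEP}}-p_{\text{VEP}})V_{m_{\text{VEP}}}^*\Vert_\infty$ already yields the $(1-\gamma)^{-2}$ prefactor, hence $(1-\gamma)^{-4}$ after squaring to solve for $N$. So Hoeffding is sufficient for the bound as stated; the Bernstein/variance machinery you invoke would in fact deliver the sharper $(1-\gamma)^{-3}$ rate of \citet{azar2012sample}, which is stronger than what the theorem claims. Your route is not wrong, just heavier than needed.
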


Theorem \ref{thm:sample_complexity}, which is based on the results of \citet{azar2012sample}, implies that compared to a regular model, a VE partial model is likely to require less samples in obtaining an $\varepsilon$ estimation of the optimal action value function through the use of Q-value iteration with a generative model, and the number of samples required is likely to be minimized when the VE partial model is a minimal one.

\section{Experimental Results}
\label{sec:experiments}

We start this section by performing experiments to demonstrate the scalability advantages of minimal VE partial models, which are illustrations of the theoretical results derived in Sec.\ \ref{sec:theory}, and then we perform experiments to demonstrate the robustness advantages of these models. The details of all of our experiments can be found in App.\ \ref{appsec:exp_details}.

\begin{figure*}
    \centering
    \begin{subfigure}{0.19\textwidth}
        \centering
        \includegraphics[width=0.75\textwidth]{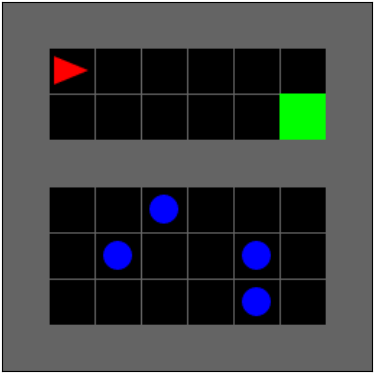}
        \caption{\small 8x8 BlueBalls-R}
    \end{subfigure}
    \begin{subfigure}{0.19\textwidth}
        \centering
        \includegraphics[width=0.75\textwidth]{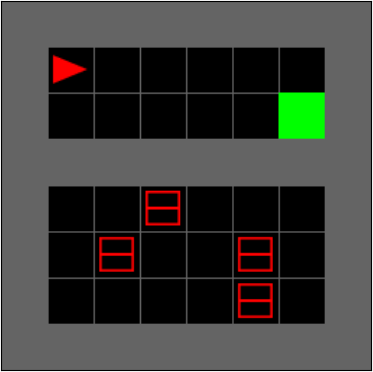}
        \caption{\small 8x8 RedBoxes-R}
    \end{subfigure}
    \begin{subfigure}{0.19\textwidth}
        \centering
        \includegraphics[width=0.75\textwidth]{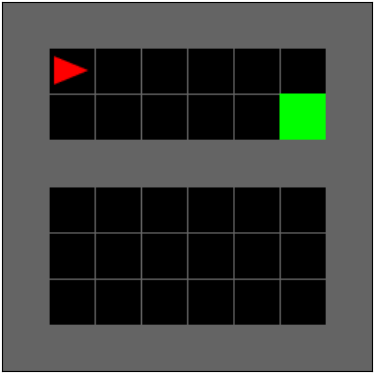}
        \caption{\small 8x8 NoObstacles-R}
    \end{subfigure}
    \begin{subfigure}{0.19\textwidth}
        \centering
        \includegraphics[width=0.75\textwidth]{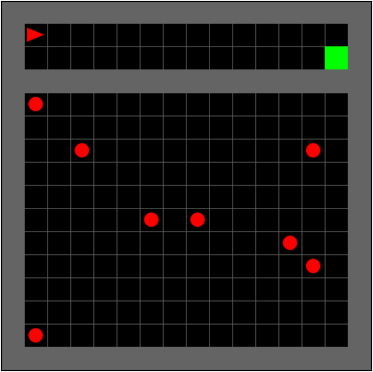}
        \caption{\small 16x16 RedBalls-R}
    \end{subfigure}
    \begin{subfigure}{0.19\textwidth}
        \centering
        \includegraphics[width=0.75\textwidth]{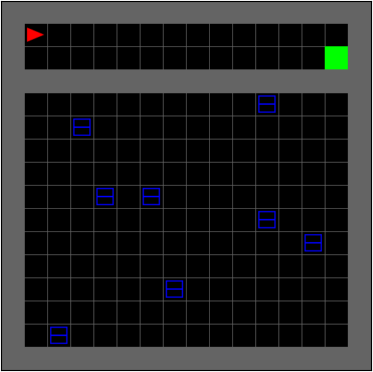}
        \caption{\small 16x16 BlueBoxes-R}
    \end{subfigure}
    \caption{Variations of the regular 2RDO environment with grid sizes of 8x8 and 16x16. In these environments, there are either no obstacles (c), or there are several obstacles (balls and boxes) with different colors (a, b, d, e).}
    \label{fig:minigrid_envs}
\end{figure*}

\textbf{Environments.} We perform experiments (i) on the SW environment (see Fig.\ \ref{fig:squirrels_world}), (ii) on different versions of the Two Rooms Dynamic Obstacles (2RDO) environment that are built on top of Minigrid \citep{gym_minigrid} (see Fig.\ \ref{fig:minigrid_envs} \& \ref{fig:minigrid_envs_key}), and (iii) on several Procgen environments \citep{cobbe2020leveraging} (see Fig.\ \ref{fig:procgen_envs}). We choose these environments as the first two allow for designing controlled experiments that are helpful in answering the questions of interest to this study and the Procgen environments are helpful in demonstrating the capabilities of the proposed models in challenging domains. The details of the SW environment are already presented in Sec.\ \ref{sec:ve_partial_models} and we refer the reader to App.\ \ref{appsec:exp_details} for more details. In the regular 2RDO environments, the agent (red triangle) spawns in top-left of the top room and has to navigate to the green goal cell located in the bottom-right of the same room, regardless of the gaseous motions of the obstacles in the bottom room. At each time step, the agent receives an image of the current state of the grid and then, through the use of a learned state encoder, transforms this image into a feature vector. Based on this, the agent selects an action that either turns it left or right, or moves it forward. If the agent successfully navigates to the goal cell, it receives a reward of $+1$ and the episode terminates. More details on the different versions (regular and versions with a key) of the 2DRO environments and the Procgen environments can be found in App.\ \ref{appsec:exp_details}.

\begin{figure*}
    \centering
    \begin{subfigure}{0.20\textwidth}
        \centering
        \includegraphics[height=2.6cm]{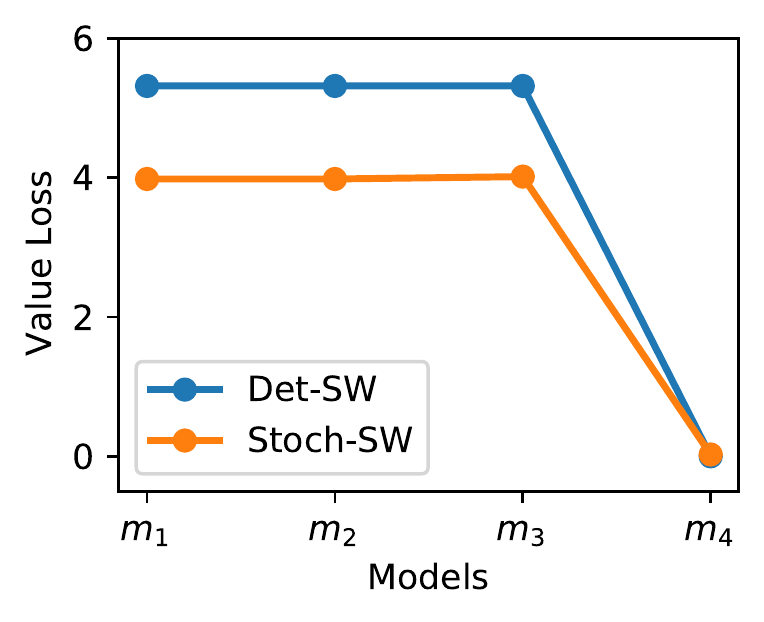}
        \vspace{-0.25cm}
        \caption{\small Value Loss} \label{fig:value_loss_plot}
    \end{subfigure}
    \begin{subfigure}{0.21\textwidth}
        \centering
        \includegraphics[height=2.6cm]{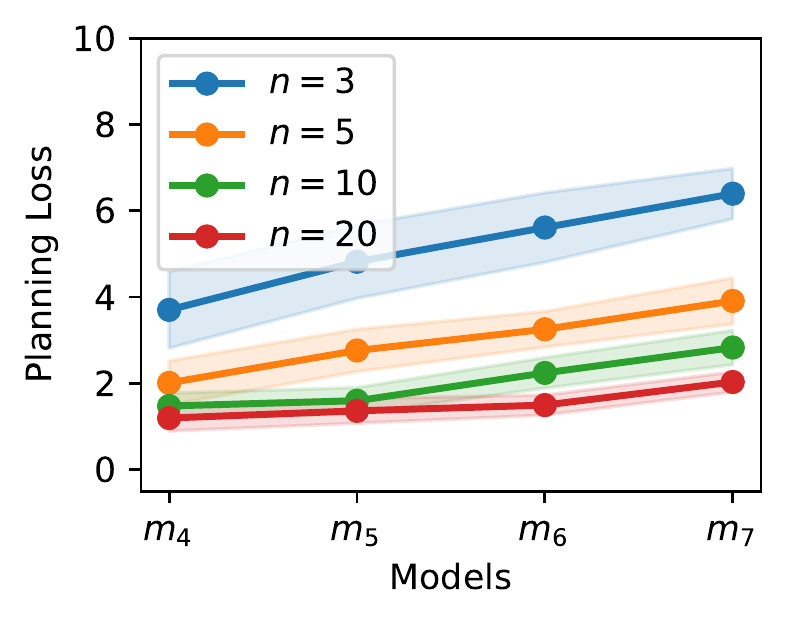}
        \vspace{-0.25cm}
        \caption{\small Planning Loss} \label{fig:planning_loss_plot}
    \end{subfigure}
    \begin{subfigure}{0.22\textwidth}
        \centering
        \includegraphics[height=2.6cm]{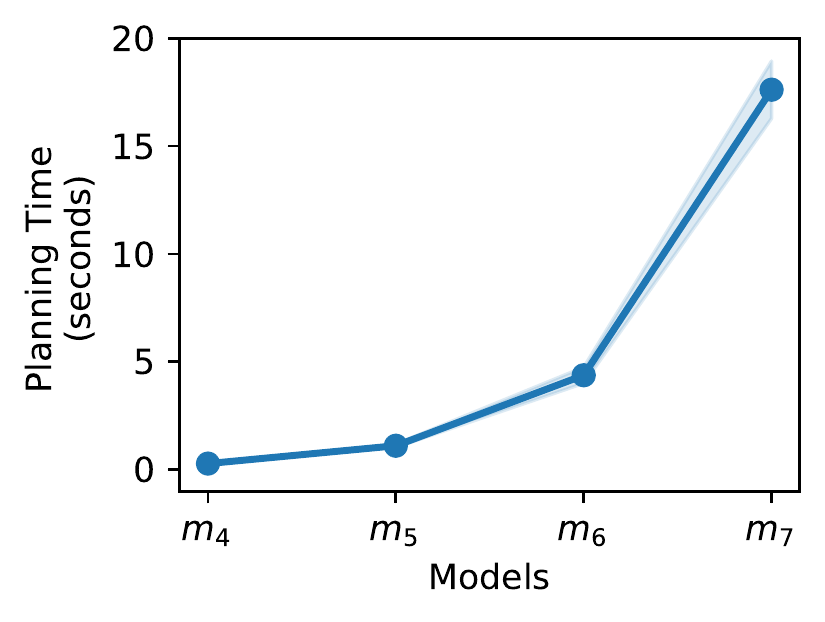}
        \vspace{-0.25cm}
        \caption{\small Planning Time} \label{fig:comp_complexity_plot}
    \end{subfigure}
    \begin{subfigure}{0.175\textwidth}
        \centering
        \includegraphics[height=2.5cm]{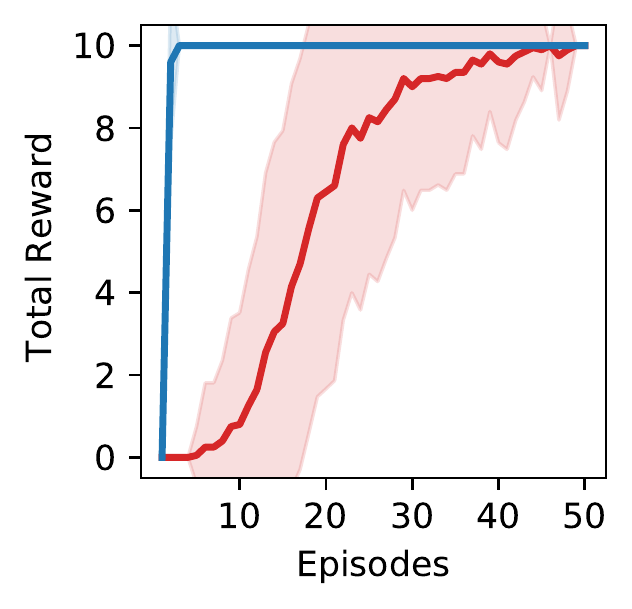}
        \vspace{-0.2cm}
        \caption{\small Det-SW} \label{fig:det_sc_plot}
    \end{subfigure}
    \begin{subfigure}{0.175\textwidth}
        \centering
        \includegraphics[height=2.5cm]{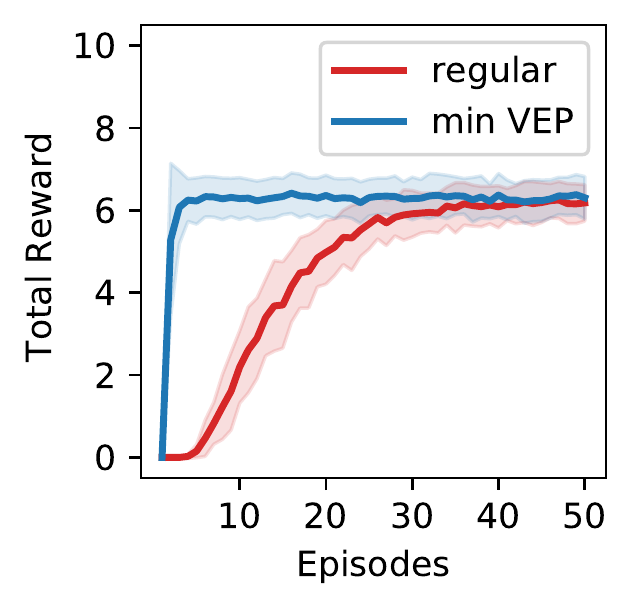}
        \vspace{-0.2cm}
        \caption{\small Stoch-SW} \label{fig:stoch_sc_plot}
    \end{subfigure}
    \caption{(a, b, c) The (a) value losses, (b) planning losses, and (c) planning times of several models. Plot (a) was obtained over a single run and plots (b) and (c) were obtained by averaging over 50 runs per model. (d, e) The total reward obtained as a result of planning with models $m_4$ and $m_7$ on the (d) Det-SW and (e) Stoch-SW environments. Shaded regions are standard errors over 50 runs.}
    \label{fig:loss_complexity_plots}
\end{figure*}

\subsection{Scalability Experiments}
\label{sec:scalability_exps}

For our scalability experiments, we perform experiments with several non-VE ($m_1$, $m_2$, $m_3$) and VE ($m_4$, $m_5$, $m_6$) partial models of both the deterministic and stochastic versions of the SW environment, referred to as Det-SW and Stoch-SW, respectively. The details of these models can be found in Table \ref{tab:models}. For all of our experiments, we use value iteration as our planning algorithm.

\textbf{Question 1.} \textit{Do minimal VE partial models allow for planning with no value loss?}

In Sec.\ \ref{sec:value_planning_loss}, we argued that by planning with a VE partial model, an agent would incur no value loss compared to planning with the true environment itself. To empirically verify this, we present the agent with a set of non-VE partial models $m_1$, $m_2$, $m_3$ and a minimal VE partial model $m_4$, and compare the value losses on both the Det-SW and Stoch-SW environments. Results are shown in Fig.\ \ref{fig:value_loss_plot}. We can indeed see that while the VE partial model incurs no value loss, the non-VE ones do incur serious value losses.

\textbf{Question 2.} \textit{Do minimal VE partial models allow for planning with less planning loss?}

In Sec.\ \ref{sec:value_planning_loss}, we argued that given a fixed amount of data, compared to a regular model, a VE partial model is likely to incur less planning loss, and this loss is likely to be minimized when the VE partial model is a minimal one. For empirical verification, we compare the planning losses of a minimal VE partial model $m_4$, two (non-minimal) VE partial models $m_5$ and $m_6$, and a regular model $m_7$, across dataset sizes of $3$, $5$, $10$ and $20$, which corresponds to the number of samples for each $(f,a)$ pair, on the Stoch-SW environment. Results in Fig.\ \ref{fig:planning_loss_plot} show that, as expected, VE partial models indeed incur less planning losses than regular models, and the minimal VE partial model incurs the least planning loss. 

\textbf{Question 3.} \textit{Do minimal VE partial models provide computational complexity benefits?}

In Sec.\ \ref{sec:comp_sample_complexity}, we argued that compared to regular models, planning with VE partial models would provide a significant computational complexity benefit and this benefit would be maximized when the model used for planning is a minimal VE partial model. To empirically verify this, we present the agent with a minimal VE partial model $m_4$, two VE partial models $m_5$ and $m_6$, and a regular model $m_7$ of the Det-SW environment, and compare the average time it takes to perform a single step of value iteration for each of these models. Results are shown in Fig.\ \ref{fig:comp_complexity_plot}. As can be seen, planning with VE partial models indeed provides significant computational complexity benefits, and this benefit is maximized when the VE partial model is a minimal one.

\textbf{Question 4.} \textit{Do minimal VE partial models provide sample complexity benefits?}

In Sec.\ \ref{sec:comp_sample_complexity}, we argued that compared to regular models, planning with VE partial models is likely to provide a sample complexity benefit and this benefit is likely to be maximized when the model that is used for planning is a minimal VE partial model. For empirical verification, we present the agent with a minimal VE partial model $m_4$ and with a regular model $m_7$ as generative models, and compare the sample efficiencies, as a result of performing Q-value iteration, on the Det-SW and Stoch-SW environments. In these experiments, after every episodic interaction, the agent updates its model with the collected trajectory, and then performs Q-value iteration until convergence. Results in Fig.\ \ref{fig:det_sc_plot} \& \ref{fig:stoch_sc_plot} show that, as expected, planning with minimal VE partial models indeed provides significant sample efficiency benefits compared to planning with regular models.

\subsection{Robustness Experiments}
\label{sec:robustness_exps}

For our robustness experiments, we perform experiments on different versions of the 2RDO environment with grid sizes of 8x8 and 16x16. For convenience, we will refer to these environments as follows: grid size, followed by their obstacle type, followed by its version (regular or version with a key). For example, we will refer to the regular 8x8 2DRO environment with red balls as 8x8 RedBalls-R (see Fig.\ \ref{fig:minigrid_envs}) and will refer to the 8x8 2DRO environment with red balls and a key as 8x8 RedBalls-K (see Fig.\ \ref{fig:minigrid_envs_key}). We also perform experiments on several Procgen environments (see Fig.\ \ref{fig:procgen_envs}). For all of our experiments, we use the straightforward decision-time planning algorithm of \citet{zhao2021consciousness} (see Alg.\ \ref{alg:alg_DT}), whose details can be found in App.\ \ref{appsec:exp_details}. We choose this algorithm as it is already tuned to work well in the MiniGrid and Procgen environments, however, we note that the following results are expected to be independent of the choice of the model-based RL algorithm that is being used. As this algorithm makes use of neural networks, before moving on to the robustness experiments, we first try to answer the following question.

\begin{figure*}
    \centering
    \begin{subfigure}{0.225\textwidth}
        \centering
        \includegraphics[width=0.8\textwidth]{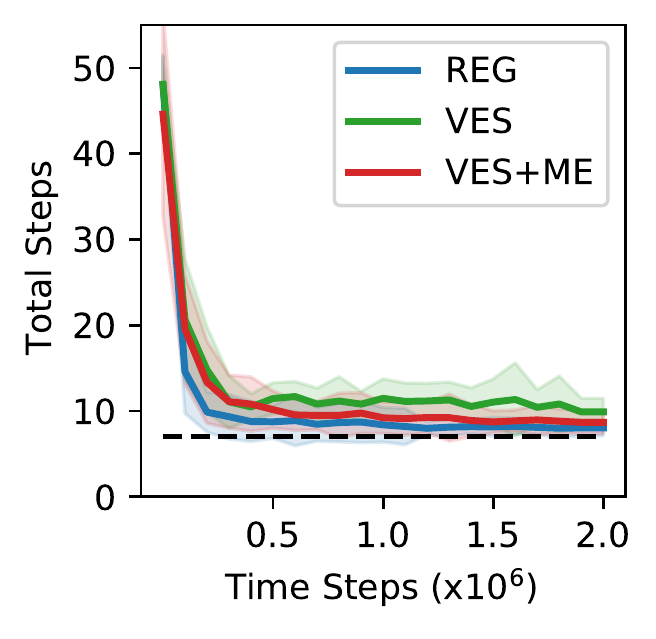}
        \vspace{-0.25cm}
        \caption{\small 8x8 BlueBalls-R} \label{fig:small_blue_balls_plot}
    \end{subfigure}
    \hspace{0.1cm}
    \begin{subfigure}{0.225\textwidth}
        \centering
        \includegraphics[width=0.8\textwidth]{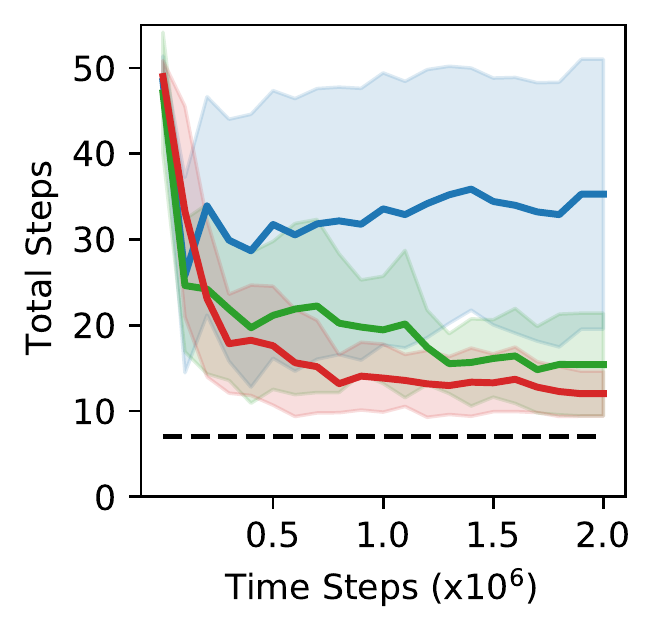}
        \vspace{-0.25cm}
        \caption{\small 8x8 NoObstacles-R} \label{fig:small_no_balls_plot}
    \end{subfigure}
    \hspace{0.1cm}
    \begin{subfigure}{0.225\textwidth}
        \centering
        \includegraphics[width=0.825\textwidth]{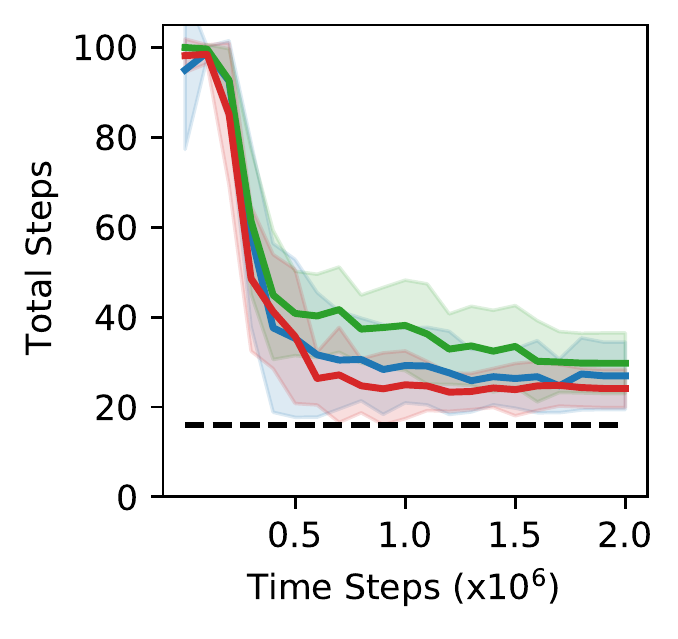}
        \vspace{-0.25cm}
        \caption{\small 16x16 BlueBalls-R} \label{fig:big_blue_balls_plot}
    \end{subfigure}
    \hspace{0.1cm}
    \begin{subfigure}{0.225\textwidth}
        \centering
        \includegraphics[width=0.825\textwidth]{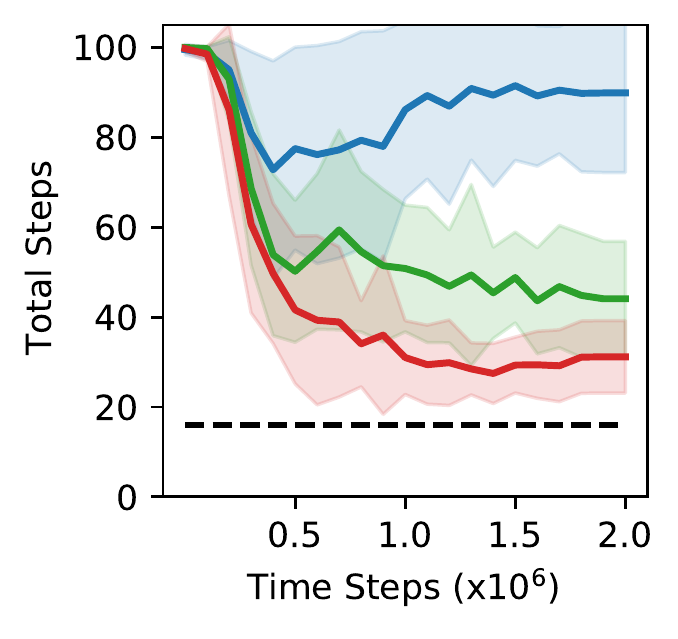}
        \vspace{-0.25cm}
        \caption{\small 16x16 NoObstacles-R} \label{fig:big_no_balls_plot}
    \end{subfigure}
    \\
    \vspace{0.5cm}
    \begin{subfigure}{0.225\textwidth}
        \centering
        \includegraphics[width=0.8\textwidth]{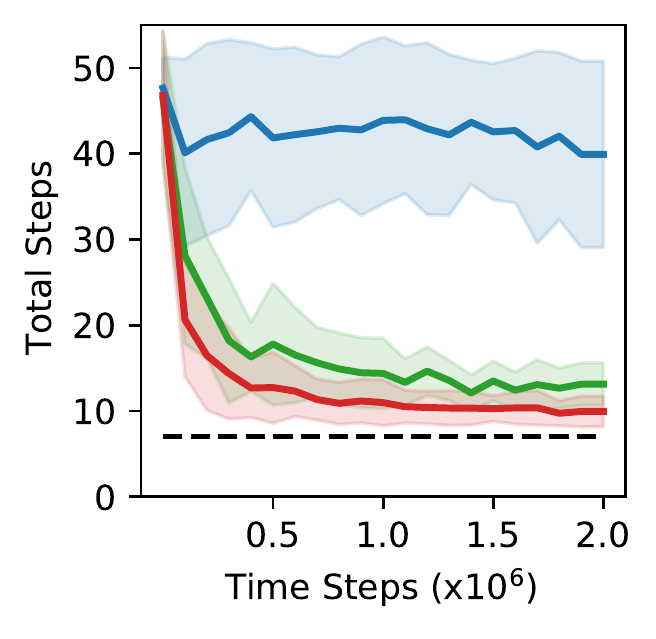}
        \vspace{-0.25cm}
        \caption{\small 8x8 RedBalls-R} \label{fig:tr_small_red_balls_plot}
    \end{subfigure}
    \hspace{0.1cm}
    \begin{subfigure}{0.225\textwidth}
        \centering
        \includegraphics[width=0.8\textwidth]{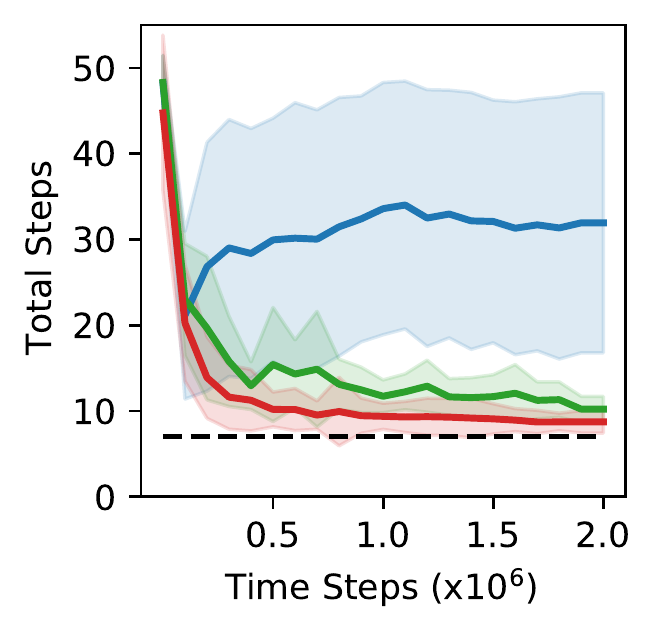}
        \vspace{-0.25cm}
        \caption{\small 8x8 GreyBalls-R} \label{fig:tr_small_grey_balls_plot}
    \end{subfigure}
    \hspace{0.1cm}
    \begin{subfigure}{0.225\textwidth}
        \centering
        \includegraphics[width=0.8\textwidth]{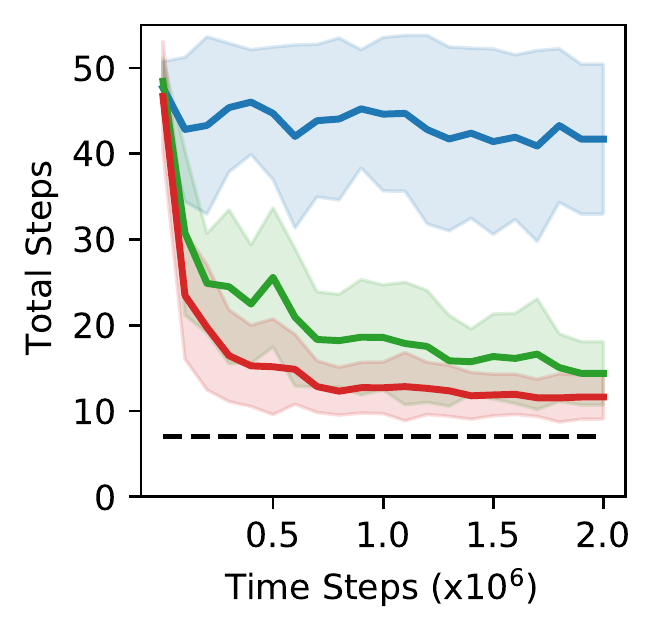}
        \vspace{-0.25cm}
        \caption{\small 8x8 RedBoxes-R} \label{fig:tr_small_red_boxes_plot}
    \end{subfigure}
    \hspace{0.1cm}
    \begin{subfigure}{0.225\textwidth}
        \centering
        \includegraphics[width=0.8\textwidth]{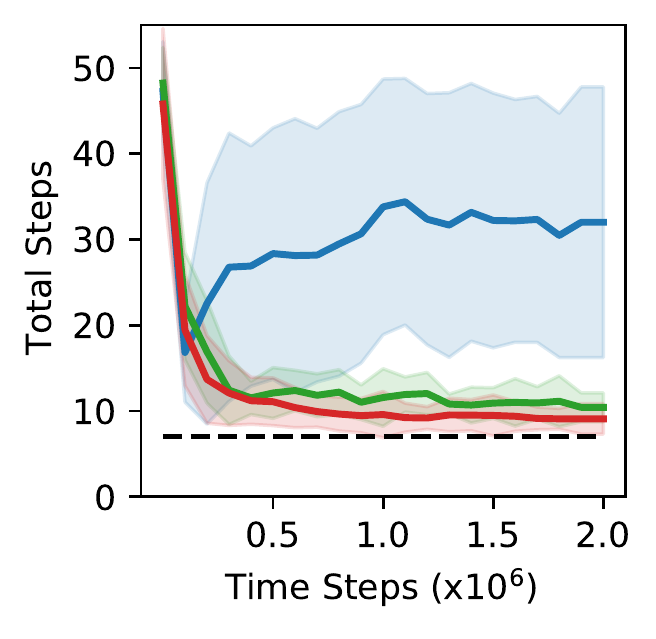}
        \vspace{-0.25cm}
        \caption{\small 8x8 GreyBoxes-R} \label{fig:tr_small_grey_boxes_plot}
    \end{subfigure}
    \\
    \vspace{0.5cm}
    \begin{subfigure}{0.225\textwidth}
        \centering
        \includegraphics[width=0.825\textwidth]{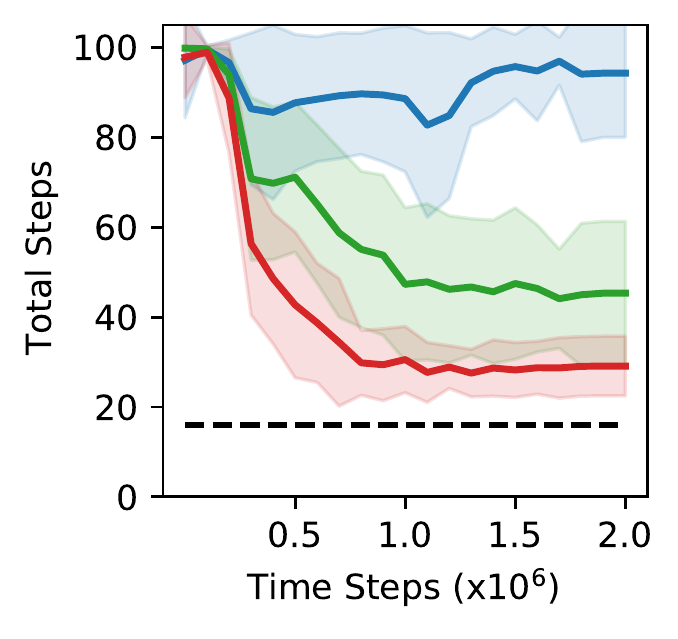}
        \vspace{-0.25cm}
        \caption{\small 16x16 RedBalls-R} \label{fig:tr_big_red_balls_plot}
    \end{subfigure}
    \hspace{0.1cm}
    \begin{subfigure}{0.225\textwidth}
        \centering
        \includegraphics[width=0.825\textwidth]{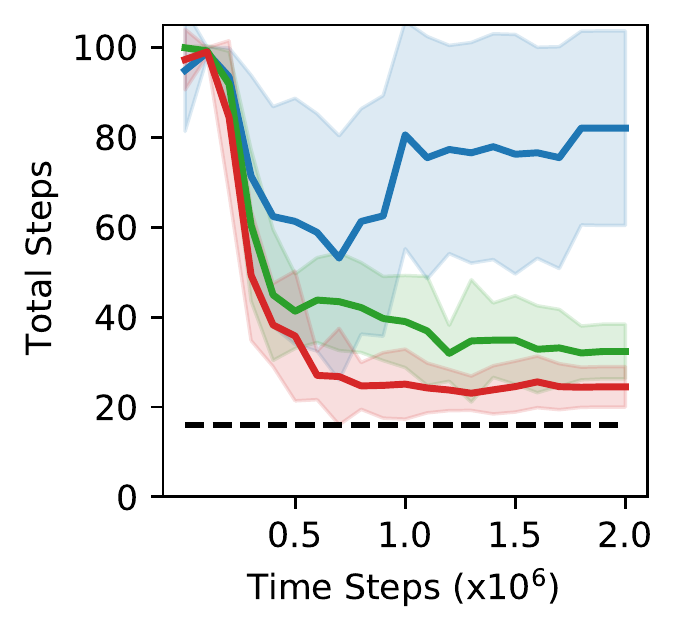}
        \vspace{-0.25cm}
        \caption{\small 16x16 GreyBalls-R} \label{fig:tr_big_grey_balls_plot}
    \end{subfigure}
    \hspace{0.1cm}
    \begin{subfigure}{0.225\textwidth}
        \centering
        \includegraphics[width=0.825\textwidth]{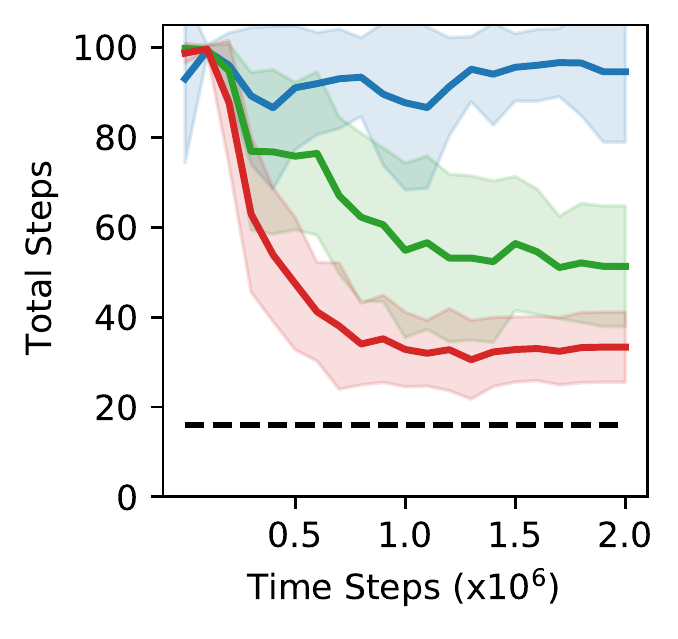}
        \vspace{-0.25cm}
        \caption{\small 16x16 RedBoxes-R} \label{fig:tr_big_red_boxes_plot}
    \end{subfigure}
    \hspace{0.1cm}
    \begin{subfigure}{0.225\textwidth}
        \centering
        \includegraphics[width=0.825\textwidth]{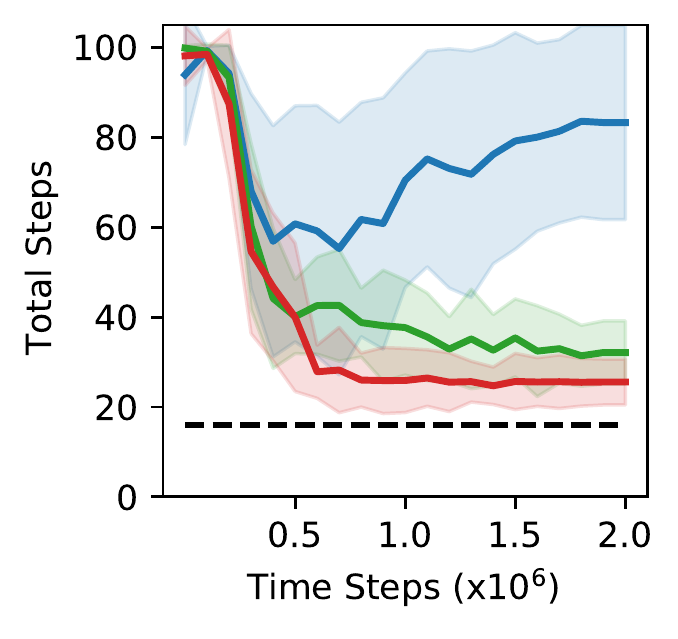}
        \vspace{-0.25cm}
        \caption{\small 16x16 GreyBoxes-R} \label{fig:tr_big_grey_boxes_plot}
    \end{subfigure}
    \caption{The total steps to reach the goal in the 8x8 and 16x16 versions of the (a, c) BlueBalls-R, (b, d) NoObstacles-R, (e, i) RedBalls-R, (f, j) GreyBalls-R, (g, k) RedBoxes-R and (h, l) GreyBoxes-R environments for the $A_{\textit{REG}}$, $A_{\textit{VES}}$ and $A_{\textit{VES}+\textit{ME}}$ agents. Black dashed lines indicate the performance of the optimal policy in the corresponding environments. Shaded regions are standard errors over 25 runs.}
    \label{fig:transfer_plots}
\end{figure*}

\textbf{Question 5.} \textit{How to learn minimal VE partial models with deep learning architectures?}

So far, for illustration purposes, we have only performed experiments in which we had a direct control over the features of the agent's model (see the models in Table \ref{tab:models}). However, in realistic scenarios, the agent would have to come up on its own with a set of features to build a model of the only relevant aspects of its environment. A very popular way of letting the agent come up with its own features is to implement the agent's encoder, value estimator and model with neural networks, and then to train it end-to-end on the environment of interest. However, in order for the agent to come up with \emph{only} the relevant features, it has to be trained with the right inductive biases. Even though finding the right inductive biases to train a model-free or model-based RL agent is still an open problem in the representation learning literature \citep{bengio2013representation}, in this study, we propose two inductive biases that are likely to guide the agent in coming up with only the relevant features. The first one is to only let the value estimator shape the encoder and prevent the model from doing so (see Fig.\ \ref{fig:deep_learning_arch}). In this way, the agent can be guided in learning the features that are relevant for predicting the right values in the environment. And, the second one is to train the agent across a variety of environments in which the irrelevant aspects keep changing and the relevant ones stay the same. In this way, the agent can be guided in not learning the irrelevant aspects of the environment.

In order to test the usefulness of these two inductive biases in coming up with only the relevant features of the environment, we compare three different agents: (i) a regular agent, $A_{\textit{REG}}$, that was trained on the 8x8 BlueBalls-R environment and whose encoder was jointly shaped by its value estimator and model, (ii) an agent, $A_{\textit{VES}}$, that was again trained on the 8x8 BlueBalls-R environment, but whose encoder was only shaped by its value estimator, and (iii) an agent, $A_{\textit{VES}+\textit{ME}}$, that was trained on the 8x8 BlueBalls-R, GreenBalls-R, PurpleBalls-R and YellowBalls-R environments and whose encoder was only shaped by its value estimator. We compare these agents on the 8x8 BlueBalls-R and NoObstacles-R environments, as they get trained on their respective environments. If the agent is successful in coming up with only the relevant features of the environment, which are the positions of the agent and the goal, and not the positions and motions of the obstacles, we would expect it to perform similarly on the 8x8 BlueBalls-R and 8x8 NoObstacles-R environments. Results are shown in Fig.\ \ref{fig:small_blue_balls_plot} \& \ref{fig:small_no_balls_plot}. As can be seen, even though all of the agents perform well on the 8x8 BlueBalls-R environment, the $A_{\textit{REG}}$ agent completely fails on the 8x8 NoObstacles-R environment, demonstrating that without the necessary inductive biases an agent is not capable of coming up with only the relevant features itself. We can also see that the $A_{\textit{VES}}$ agent achieves a better performance than the $A_{\textit{REG}}$ agent and that the $A_{\textit{VES}+\textit{ME}}$ agent achieves an even better performance than the $A_{\textit{VES}}$ agent, demonstrating the usefulness of our proposed inductive biases in inducing models that display the behavior of minimal VE partial models. In order to test the scalability of our results, we have also performed the same experiments with 16x16 versions of the environments. As can be seen in Fig.\ \ref{fig:big_blue_balls_plot} \& \ref{fig:big_no_balls_plot}, we obtain similar results.

\textbf{Question 6.} \textit{Can minimal VE partial models be useful in performing robust transfer?}

\begin{figure*}
    \centering
    \begin{subfigure}{0.225\textwidth}
        \centering
        \includegraphics[width=0.8\textwidth]{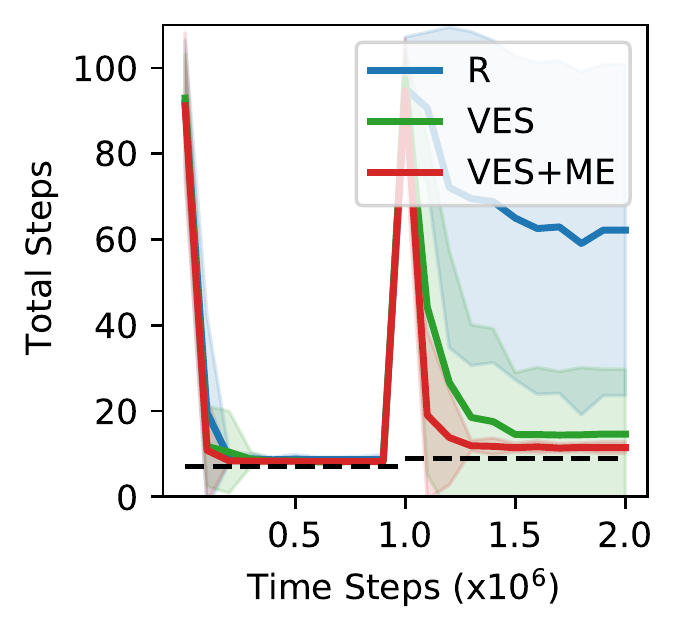}
        \vspace{-0.25cm}
        \caption{\small 8x8 NoObstacles-K} \label{fig:tr_small_no_balls_plot_adapt}
    \end{subfigure}
    \hspace{0.1cm}
    \begin{subfigure}{0.225\textwidth}
        \centering
        \includegraphics[width=0.8\textwidth]{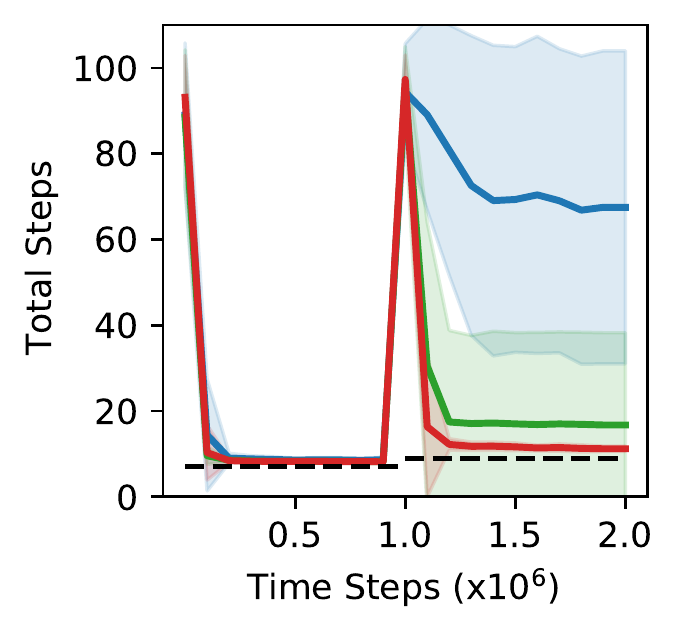}
        \vspace{-0.25cm}
        \caption{\small 8x8 RedBalls-K} \label{fig:tr_small_red_balls_plot_adapt}
    \end{subfigure}
    \hspace{0.1cm}
    \begin{subfigure}{0.225\textwidth}
        \centering
        \includegraphics[width=0.8\textwidth]{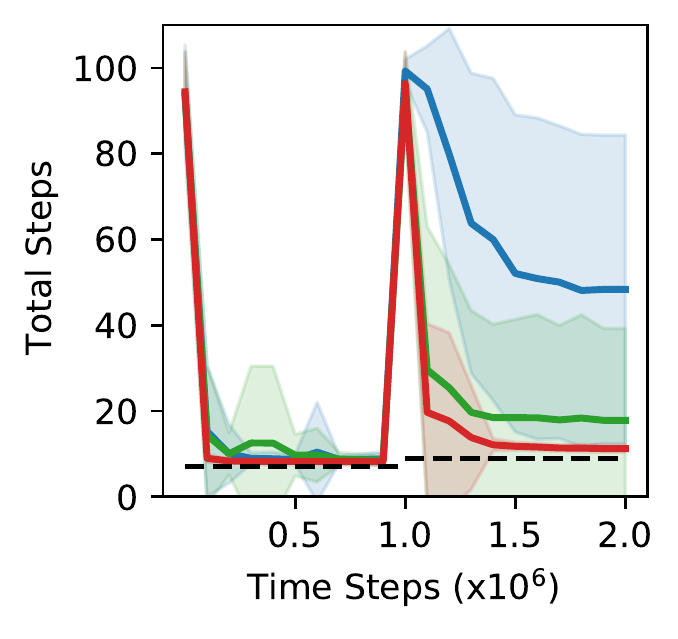}
        \vspace{-0.25cm}
        \caption{\small 8x8 GreyBalls-K} \label{fig:tr_small_grey_balls_plot_adapt}
    \end{subfigure}
    \hspace{0.1cm}
    \begin{subfigure}{0.225\textwidth}
        \centering
        \includegraphics[width=0.8\textwidth]{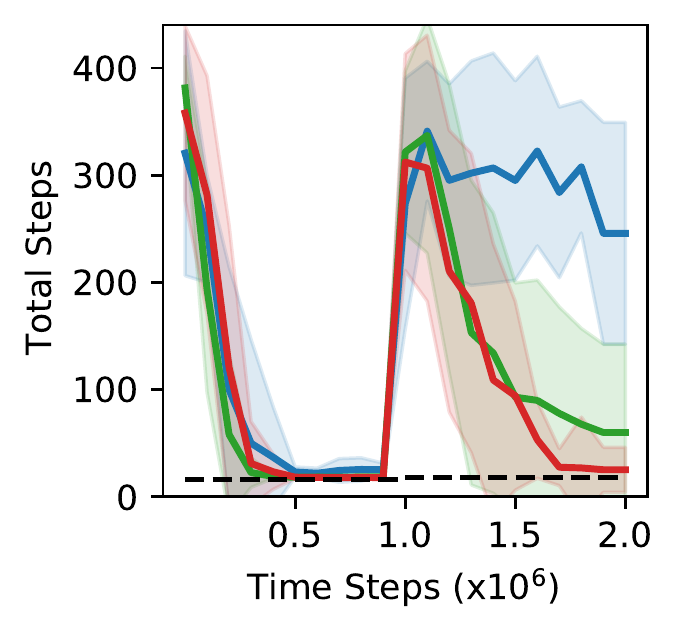}
        \vspace{-0.25cm}
        \caption{\small 16x16 NoObstacles-K} \label{fig:tr_big_no_balls_plot_adapt}
    \end{subfigure}
    \hspace{0.1cm}
    \\
    \vspace{0.5cm}
    \begin{subfigure}{0.225\textwidth}
        \centering
        \includegraphics[width=0.8\textwidth]{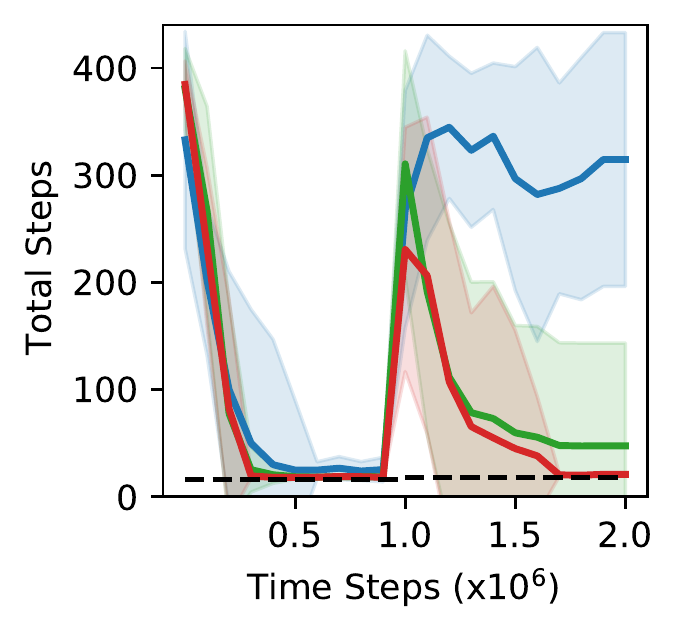}
        \vspace{-0.25cm}
        \caption{\small 16x16 RedBalls-K} \label{fig:tr_big_grey_balls_plot_adapt}
    \end{subfigure}
    \hspace{0.1cm}
    \begin{subfigure}{0.225\textwidth}
        \centering
        \includegraphics[width=0.8\textwidth]{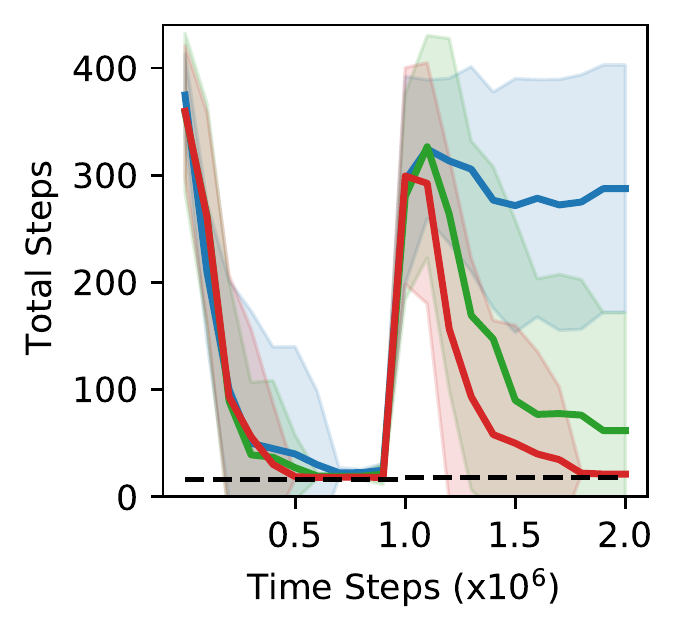}
        \vspace{-0.25cm}
        \caption{\small 16x16 GreyBalls-K} \label{fig:tr_big_grey_ball_plot_adapt}
    \end{subfigure}
    \hspace{0.1cm}
    \begin{subfigure}{0.225\textwidth}
        \centering
        \includegraphics[width=0.8\textwidth]{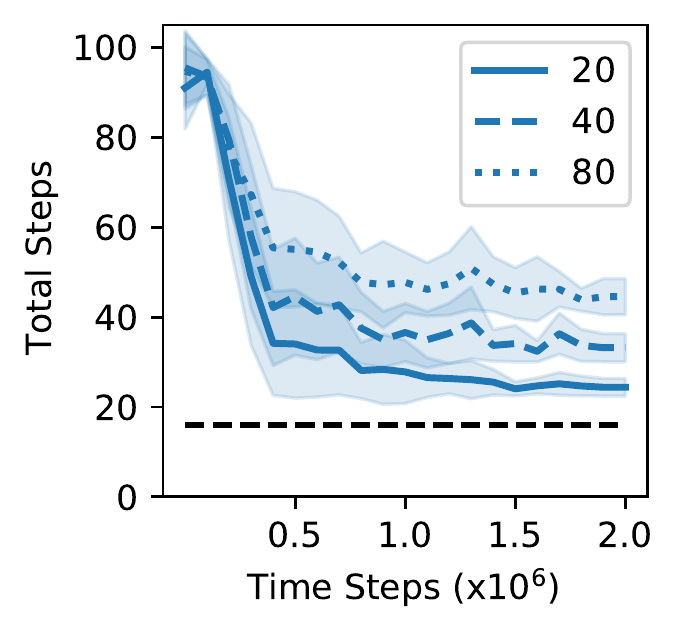}
        \vspace{-0.25cm}
        \caption{\small The $A_{\textit{REG}}$ agent} \label{fig:cme_reg_plot}
    \end{subfigure}
    \hspace{0.1cm}
    \begin{subfigure}{0.225\textwidth}
        \centering
        \includegraphics[width=0.8\textwidth]{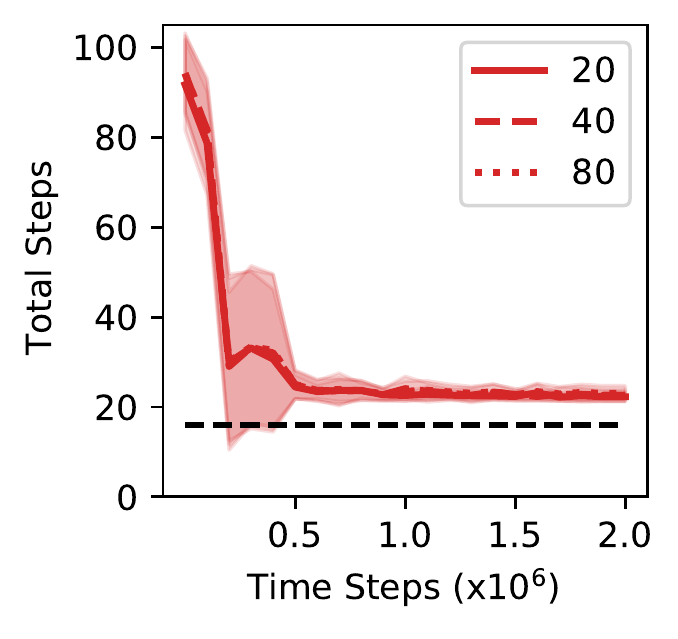}
        \vspace{-0.25cm}
        \caption{\small The $A_{\textit{VES}+\textit{ME}}$ agent} \label{fig:cme_vseme_plot}
    \end{subfigure}
    \caption{(a, b, c, d, e, f) The total steps to reach the goal in the 8x8 and 16x16 versions of the (a, d) NoObstacles-K, (b, e) RedBalls-K, (c, f) GreyBalls-K environments for the $A_{\textit{REG}}$, $A_{\textit{VES}}$ and $A_{\textit{VES}+\textit{ME}}$ agents. For all of the plots the agents were first trained on regular versions of the 2RDO environment and then on versions with a key. (g, h) The total steps to reach the goal in the 16x16 BlueBalls-R environment for the (g) $A_{\textit{REG}}$ and (h) $A_{\textit{VES}+\textit{ME}}$ agents with search budgets of 20, 40 and 80. For all plots, the black dashed lines indicate the performance of the optimal policy in the corresponding environments and the shaded regions are standard errors over 25 runs.}
    \label{fig:transfer_cme_plots}
\end{figure*}

Now that we have come up with a simple way to learn models that display the behavior of VE partial models, we investigate the ability of these models in performing robust transfer. As minimal VE partial models only model the relevant aspects of the environment, we would expect them to be robust to the distribution shifts happening in the \emph{irrelevant} aspects of the environment. In order to test this, we compare the zero-shot performances of the $A_{\textit{REG}}$, $A_{\textit{VES}}$ and $A_{\textit{VES}+\textit{ME}}$ agents on the 8x8 and 16x16 RedBalls-R, GreyBalls-R, RedBoxes-R and GreyBoxes-R environments, as they get trained on their corresponding environments. Results are shown in Fig.\ \ref{fig:tr_small_red_balls_plot}-\ref{fig:tr_big_grey_boxes_plot}. As can be seen, while the $A_{\textit{REG}}$ agent fails and the $A_{\textit{VES}}$ agent only shows signs of robust transfer, the $A_{\textit{VES}+\textit{ME}}$ agent is able to perform robust transfer without any problem. We also compare the performances of the three agents on several Procgen environments (see App.\ \ref{appsec:add_expresults} for the details). Results in Fig.\ \ref{fig:transfer_plots_procgen} show that a similar zero-shot performance trend among the agents holds as well, corroborating our conclusion with the 2RDO environments.

Also, as minimal VE partial models model fewer aspects, compared to regular models, we would expect them to be able to quickly adapt to the distribution shifts happening in the \emph{relevant} aspects of the environment. To test this, we compare the adaptation speeds of the $A_{\textit{REG}}$, $A_{\textit{VES}}$ and $A_{\textit{VES}+\textit{ME}}$ agents to the 8x8 and 16x16 RedBalls-K, NoObstacles-K and GreyBalls-K environments after being trained in their corresponding environments (see Fig.\ \ref{fig:minigrid_envs_key}). Unlike the regular 2RDO environments, in these environments the agent has to first pick up the key to obtain a reward upon navigating to the goal cell (see App. \ref{appsec:2rdo_details}). Results are shown in Fig.\ \ref{fig:tr_small_no_balls_plot_adapt}-\ref{fig:tr_big_grey_ball_plot_adapt}. As can be seen, while the $A_{\textit{REG}}$ agent completely fails in adapting, the $A_{\textit{VES}}$ agent only shows signs of quick adaptation. However, it is the $A_{\textit{VES}+\textit{ME}}$ agent that is able to adapt the quickest. Together, these zero-shot transfer and adaptation results demonstrate the advantages of minimal VE partial models in performing robust transfer.

\begin{figure*}
    \centering
    \begin{subfigure}{0.225\textwidth}
        \centering
        \includegraphics[width=0.9\textwidth]{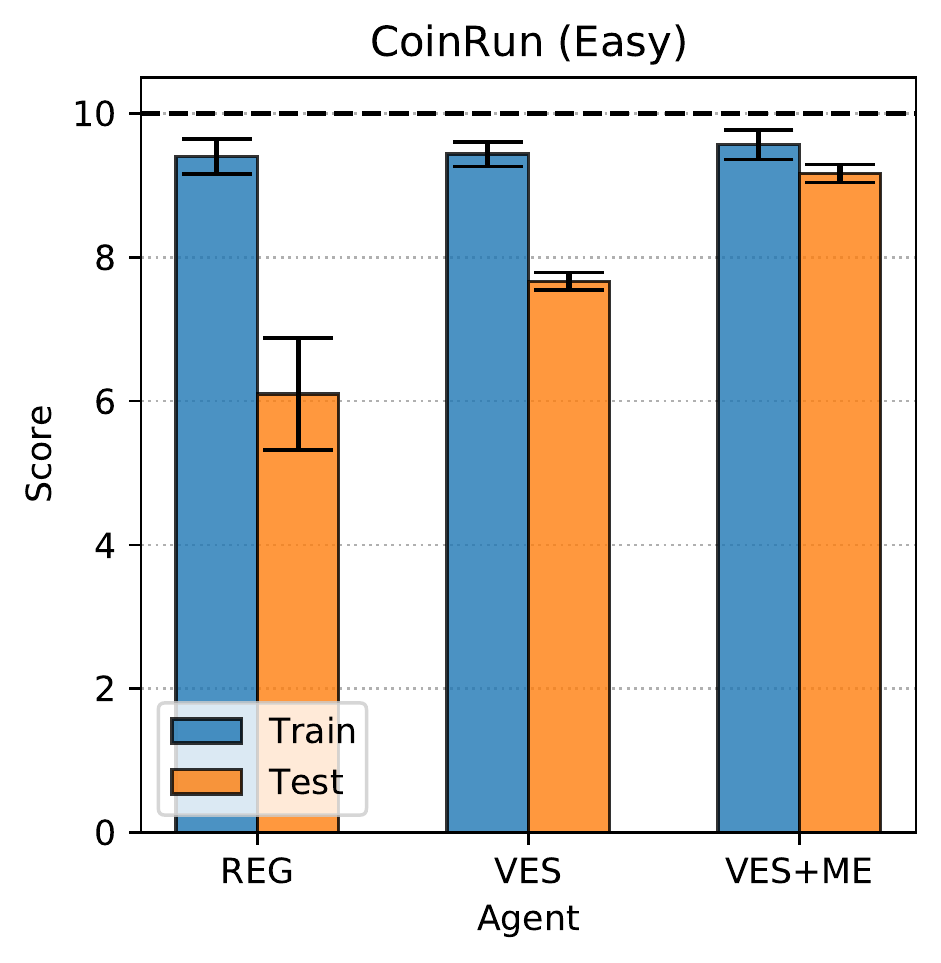}
        \vspace{-0.25cm}
        \caption{\small CoinRun (Easy)} \label{fig:tr_coinrun_easy}
    \end{subfigure}
    \hspace{0.1cm}
    \begin{subfigure}{0.225\textwidth}
        \centering
        \includegraphics[width=0.9\textwidth]{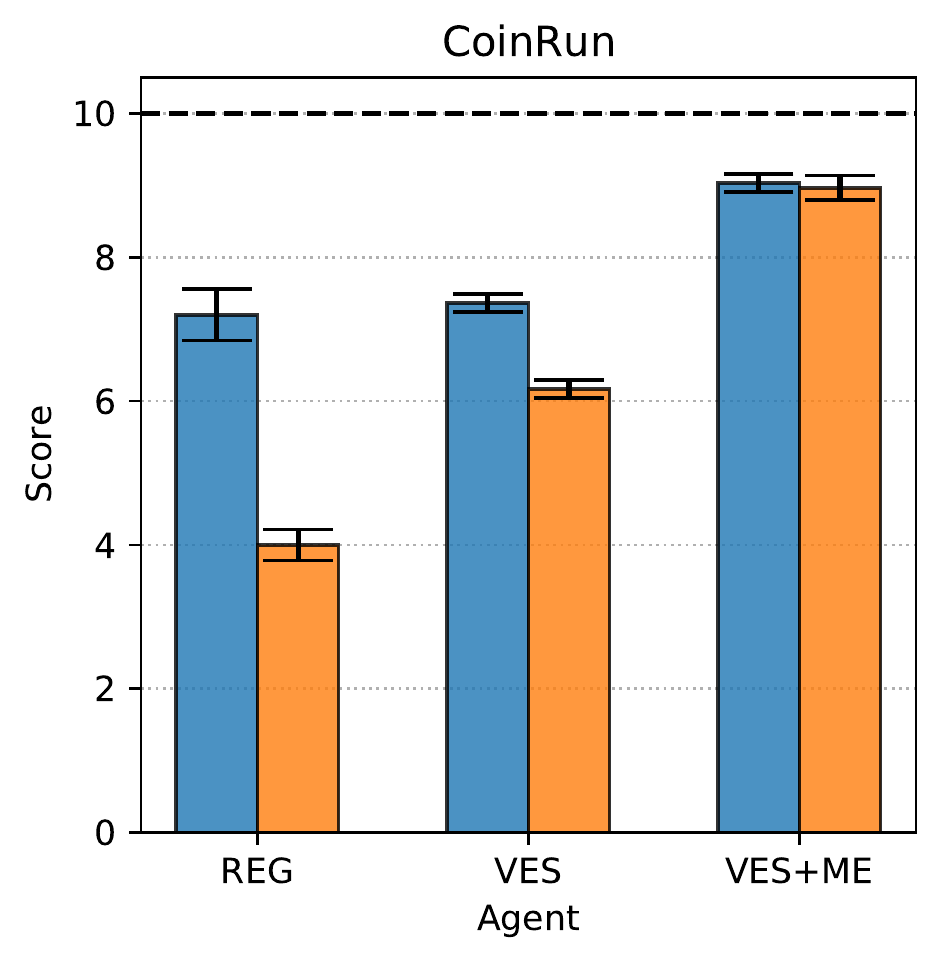}
        \vspace{-0.25cm}
        \caption{\small CoinRun} \label{fig:tr_coinrun}
    \end{subfigure}
    \hspace{0.1cm}
    \begin{subfigure}{0.225\textwidth}
        \centering
        \includegraphics[width=0.9\textwidth]{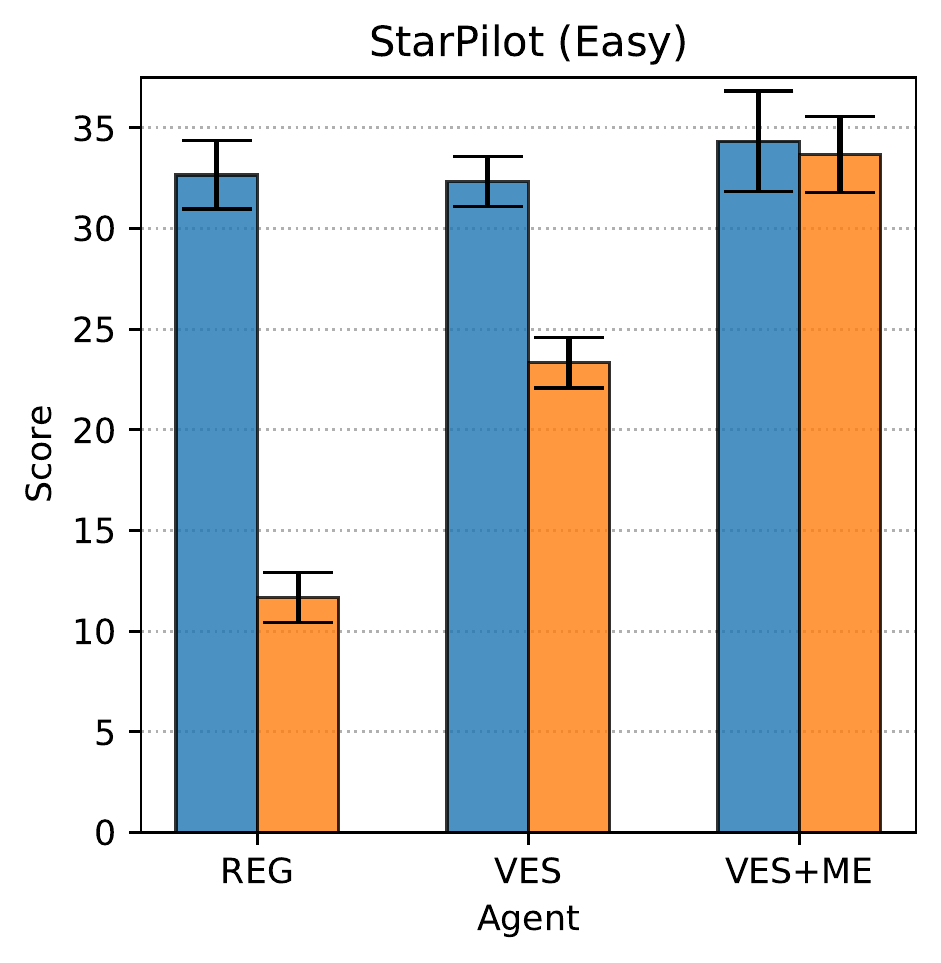}
        \vspace{-0.25cm}
        \caption{\small StarPilot (Easy)} \label{fig:tr_starpilot_easy}
    \end{subfigure}
    \hspace{0.1cm}
    \begin{subfigure}{0.225\textwidth}
        \centering
        \includegraphics[width=0.93\textwidth]{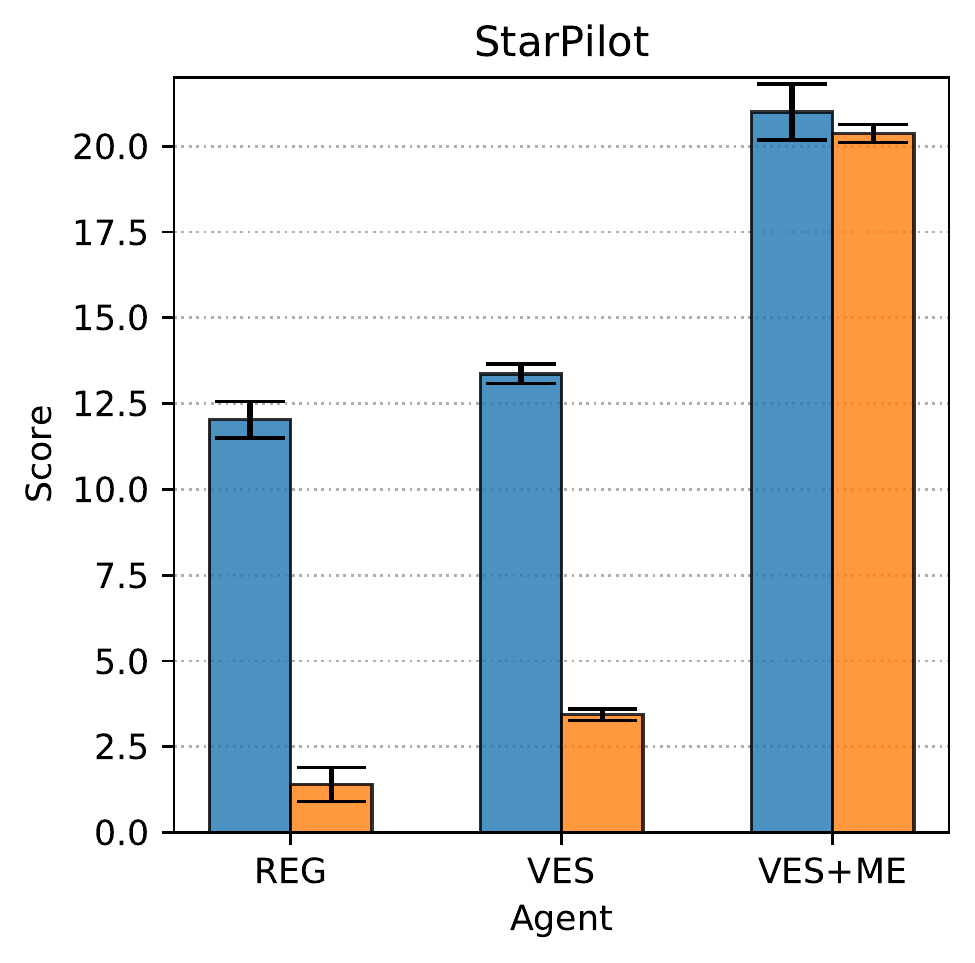}
        \vspace{-0.25cm}
        \caption{\small StarPilot} \label{fig:tr_starpilot}
    \end{subfigure}
    \\
    \vspace{0.5cm}
    \begin{subfigure}{0.225\textwidth}
        \centering
        \includegraphics[width=0.9\textwidth]{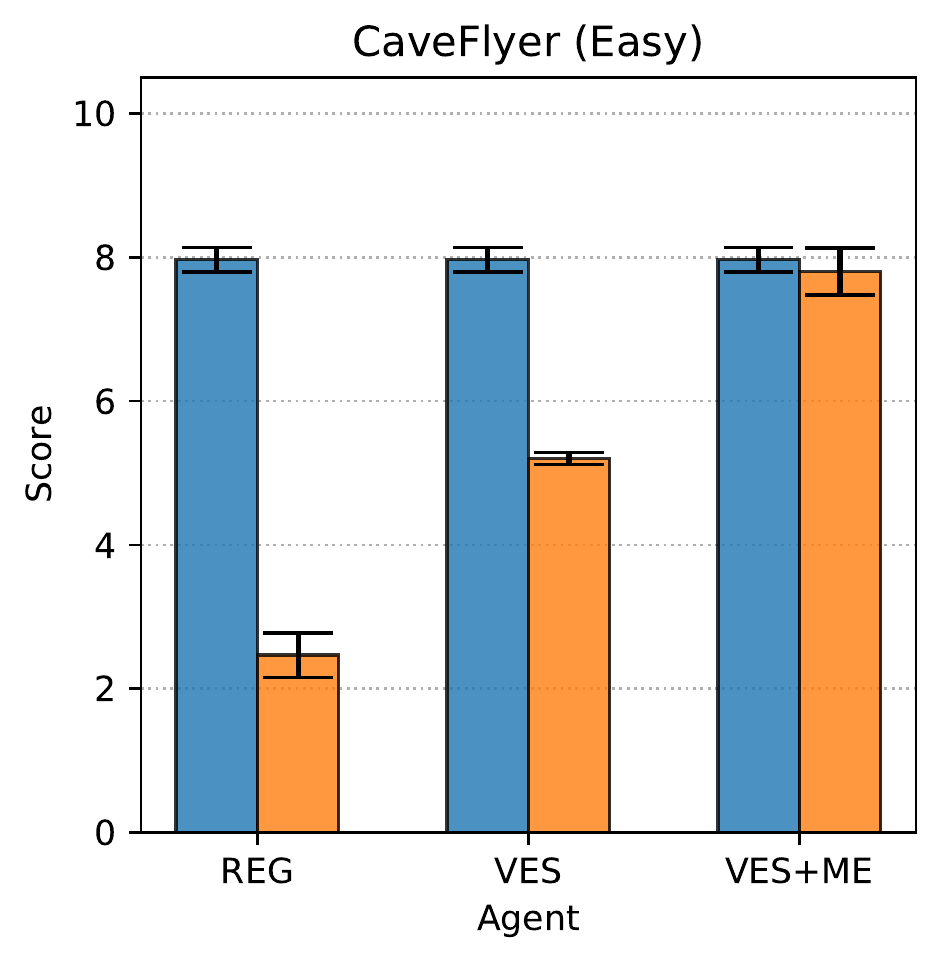}
        \vspace{-0.25cm}
        \caption{\small CaveFlyer (Easy)} \label{fig:tr_caveflyer_easy}
    \end{subfigure}
    \hspace{0.1cm}
    \begin{subfigure}{0.225\textwidth}
        \centering
        \includegraphics[width=0.9\textwidth]{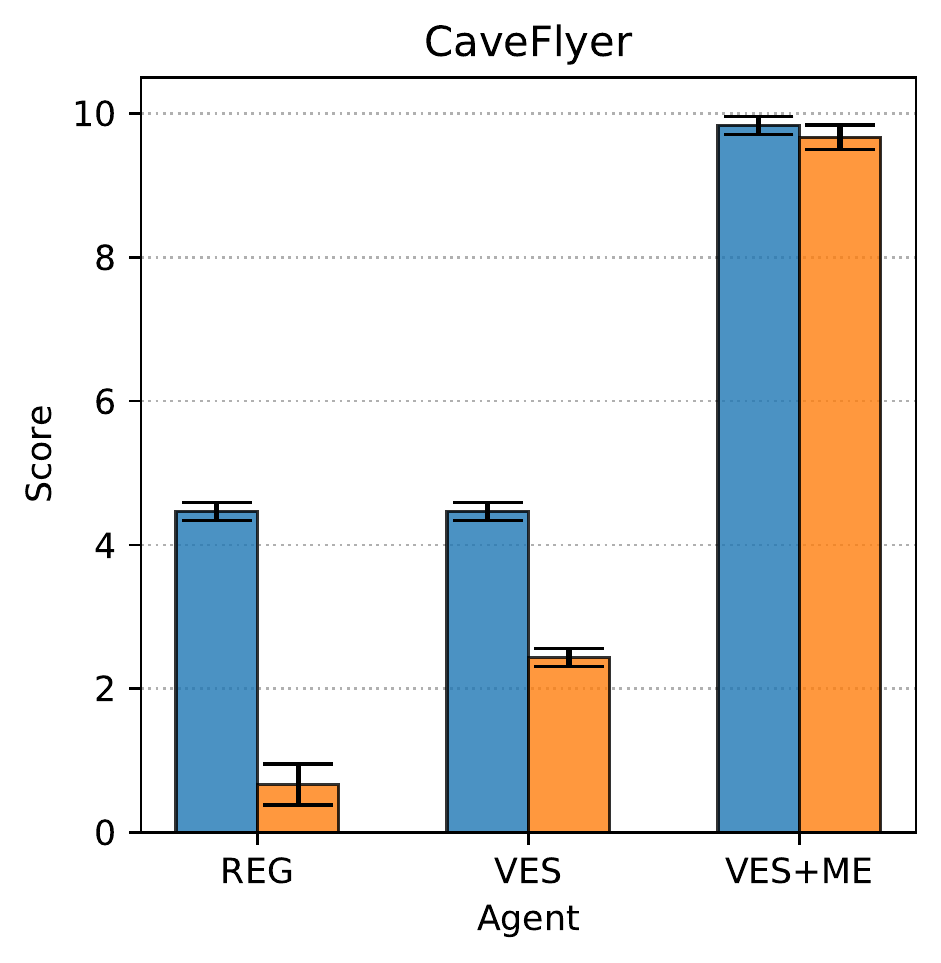}
        \vspace{-0.25cm}
        \caption{\small CaveFlyer} \label{fig:tr_caveflyer}
    \end{subfigure}
    \hspace{0.1cm}
    \begin{subfigure}{0.225\textwidth}
        \centering
        \includegraphics[width=0.9\textwidth]{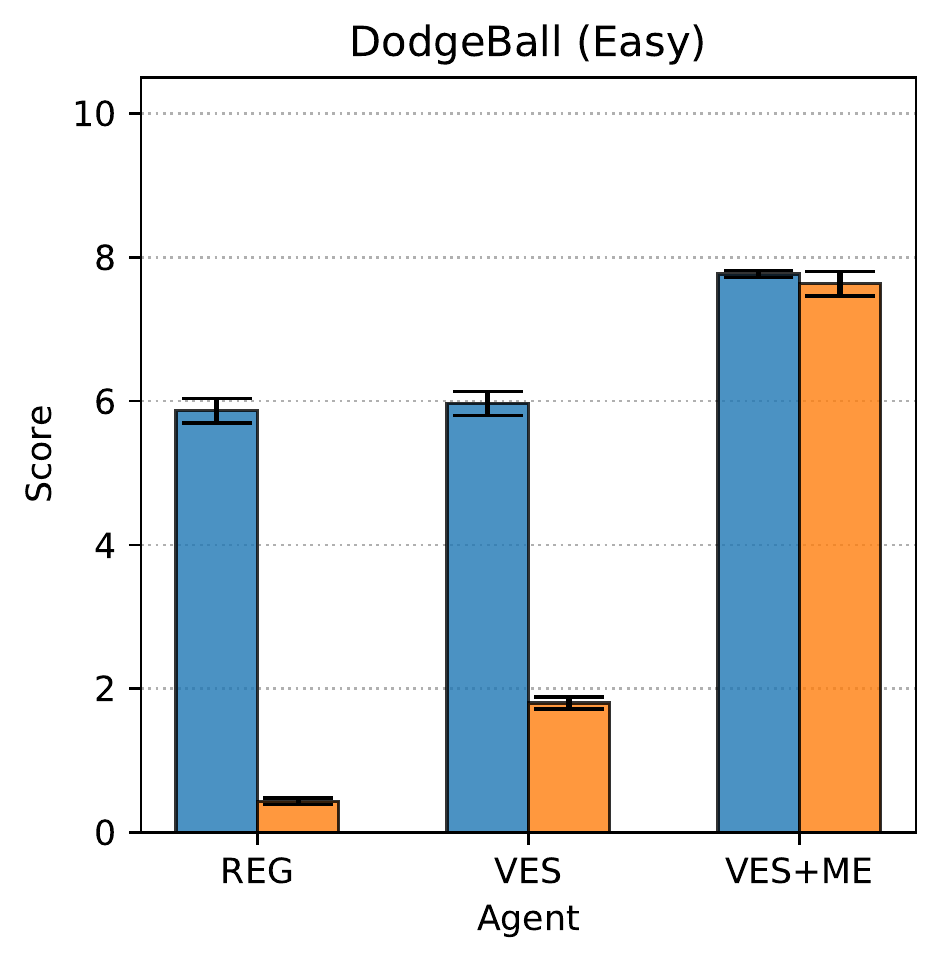}
        \vspace{-0.25cm}
        \caption{\small DodgeBall (Easy)} \label{fig:tr_dodgeball_easy}
    \end{subfigure}
    \hspace{0.1cm}
    \begin{subfigure}{0.225\textwidth}
        \centering
        \includegraphics[width=0.9\textwidth]{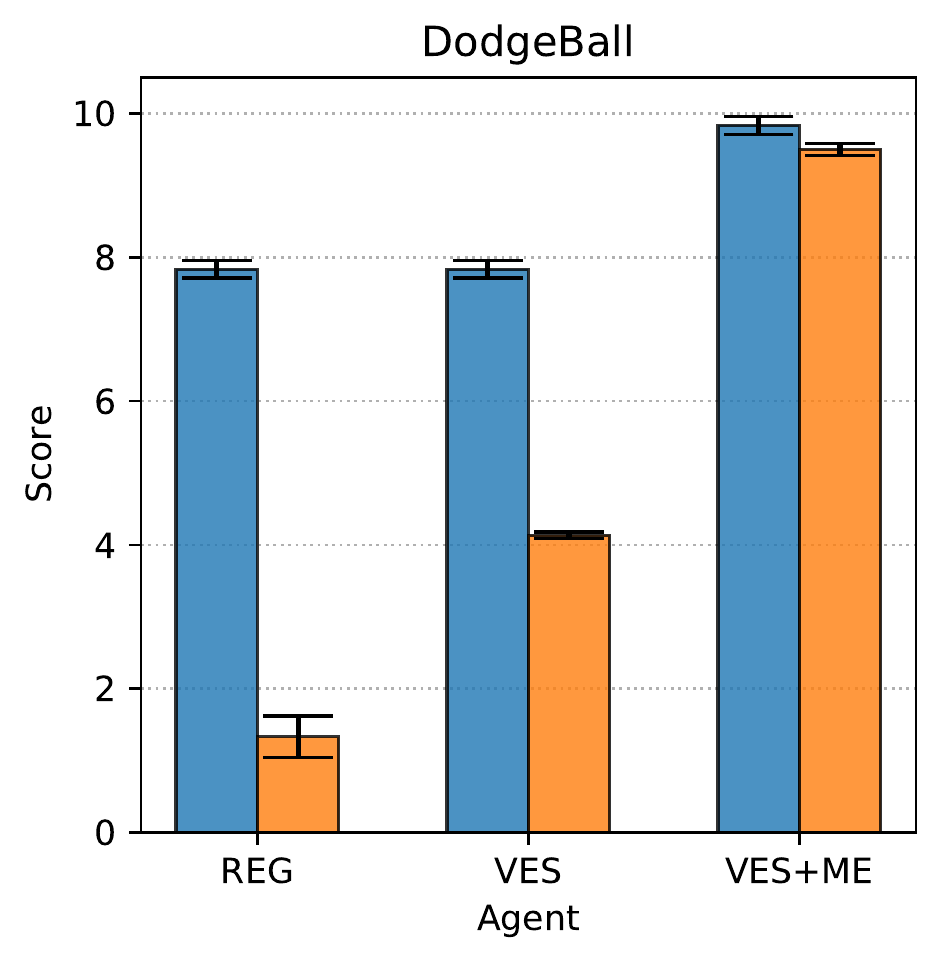}
        \vspace{-0.25cm}
        \caption{\small DodgeBall} \label{fig:tr_dodgeball}
    \end{subfigure}
    \caption{The training and test performance of the $A_{\textit{REG}}$, $A_{\textit{VES}}$ and $A_{\textit{VES}+\textit{ME}}$ agents on the (a) CoinRun (Easy), (b) CoinRun, (c) StarPilot (Easy), (d) StarPilot, (e) CaveFlyer (Easy), (f) CaveFlyer, (g) DodgeBall (Easy), and (h) DodgeBall environments. Black dashed lines indicate the maximum achievable performance in the corresponding environment. Plots without a dashed line do not have an upper bound in the maximum achievable score in their corresponding environment. The means and the standard errors are computed over 25 independent runs of the trained agents.}
    \label{fig:transfer_plots_procgen}
\end{figure*}

\textbf{Question 7.} \textit{Are minimal VE partial models more robust to compounding model errors?}

Again, as minimal VE partial models model fewer aspects of the environment, compared to regular models, we would expect them to be less susceptible to compounding model errors that happen during planning. In order to test this, we compare the performances of the $A_{\textit{REG}}$ and $A_{\textit{VES}+\textit{ME}}$ agents with search budgets (the number of time steps to unroll the model during planning) of 20, 40 and 80 on the 16x16 BlueBalls-R environment. Note that this environment has been seen before by both of the agents. Results in Fig.\ \ref{fig:cme_reg_plot} \& \ref{fig:cme_vseme_plot} show that while the performance of $A_{\textit{REG}}$ agent drops significantly with the increase in the search budget, the performance of the $A_{\textit{VES}+\textit{ME}}$ agent stays close to optimal, demonstrating the robustness of minimal VE partial models to compounding model errors.

\section{Related Work}
\label{sec:related_work}

\textbf{Partial Models.} In the context of RL, the initial studies of partial models can be dated back to the seminal study of \citet{talvitie2008simple} which proposes to learn several models of an uncontrolled dynamical system that are partial at the observation level. In contrast, we propose to learn a single and useful partial model of a controlled dynamical system that is partial at the feature level, which provides several advantages such as eliminating the question of how to combine the learned models, using them for control purposes, and making them compatible with function approximation. Our work also has a very close connection to the study of \citet{zhao2021consciousness} which proposes a transformer-based deep model-based agent that dynamically attends to relevant parts of its state representation during planning. However, our work differs in that we propose the general concept of partial models for LRL that is independent of the agent's implementation details. Lastly, another related line of research is the studies of \citet{khetarpal2020can, khetarpal2021temporally} on affordances which focus on building models that are partial in the action space. Our study is complementary to these studies in that they can still leverage (non-minimal or minimal) VE partial models to reduce the size of the feature space and further increase the benefits of performing model-based RL with partial models.

In the curiosity-driven exploration literature there has also been studies that make use of models that only model certain aspects of the environment (see e.g.\ \cite{pathak2017curiosity}). However, these studies focus on the exploration problem in the regular RL setting, whereas we focus on performing scalable and robust planning in the LRL setting. It is also important to note that, opposed to these studies, our work also provides formal definitions of partial models.

\textbf{Value-Equivalence.} A recent trend in model-based RL is to learn models that are specifically useful for value-based planning which has recently been studied under the name of the value-equivalence principle \citep[see e.g.][]{silver2017predictron, oh2017value, farquhar2017treeqn, schrittwieser2020mastering, farahmand2017value, farahmand2018iterative, grimm2020value, grimm2021proper, ayoub2020model, arumugam2022deciding, nair2020goal, saleh2022should}. Similar to all these studies, our work also advocates the idea that models should be useful in value-based planning, and not in accurately modeling the details of the environment. However, our work differs in that we are also explicit in what the model of the agent should model, and, more importantly, we demonstrate that this explicitness of the models can provide significant scalability and robustness benefits when performing model-based RL in LRL scenarios.

\section{Conclusion and Discussion}
\label{sec:conclusion}

In conclusion, in this study, we have introduced special types of models, called minimal VE partial models, that only model the relevant aspects of the environment and are particularly useful in LRL scenarios where the environment involves lots of irrelevant aspects and unexpected distribution shifts. Our theoretical results suggest that these models can provide significant advantages in the value and planning losses that are incurred during planning and in the computational and sample complexity of planning. Our empirical results (i) validate our theoretical results and show that these models can scale to large environments, that are typical in LRL, and (ii) show that these models can be robust to distribution shifts and compounding model errors. Overall, our findings suggest that minimal VE partial models can provide significant advantages in performing model-based RL in LRL scenarios. 

One limitation of our work is that, rather than providing a principled method, we have only provided several heuristics for training deep model-based RL agents so that they can come up with only the relevant features of the environment. However, we note that this is mainly due to the lack of principled approaches in the representation learning literature, and we believe that this limitation can be overcomed with more principled approaches being introduced to the literature. We hope to tackle this limitation in future work. Another important limitation is that as our main focus was to perform illustrative and controlled experiments, we have only performed experiments in environments where there is just a single task and no sequence of tasks that unfold over time. However, experiments with additional environments that have this sequential task nature can be helpful in further validating the advantages of minimal VE partial models in LRL scenarios, which we also hope to tackle in future work. Note that under these scenarios the agent would have to gradually build a set of minimal VE partial models as it faces multiple tasks over its interaction with the environment.

\subsubsection*{Acknowledgments}
This project has been partly funded by an NSERC Discovery grant and the Canada-CIFAR AI Chair program. We would like to thank Ahmed Touati for the help with the proof of Theorem 3 and the anonymous reviewers for providing critical and constructive feedback.

\bibliography{collas2023_conference}
\bibliographystyle{collas2023_conference}

\clearpage
\appendix
\counterwithin{figure}{section}
\counterwithin{table}{section}

\section{Proofs}
\label{appsec:proofs}

\begin{apptheorem}
\label{thm:valueloss_theorem_app}
Let $m_{\text{VEP}}\in \mathcal{M}_{\text{VEP}}$ be a VE partial model of the true environment $m^*\in\mathcal{M}$. Then, the value loss between an optimal policy in $m^*$, $\pi^*$, and an optimal policy in $m_{\text{VEP}}$, $\pi_{m_{\text{VEP}}}^*$ is given by:
\begin{equation}
    \left\Vert V_{m^*}^* -V_{m^*}^{\pi_{m_{\text{VEP}}}^*} \right\Vert_{\infty} = 0.
\end{equation}
\end{apptheorem}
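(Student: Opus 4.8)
The plan is to derive the statement directly from the defining property of a VE partial model, so the argument is essentially a one-step deduction. By Definition~\ref{def:ve_partial_models}, asserting that $m_{\text{VEP}}$ is a VE partial model of $m^*$ is precisely asserting that it is a VE model in the sense of the value-equivalence relation introduced in Section~\ref{sec:background}; that is, every optimal policy $\pi_{m_{\text{VEP}}}^*$ returned by planning in $m_{\text{VEP}}$ satisfies the vector equality $V_{m^*}^{\pi_{m_{\text{VEP}}}^*} = V_{m^*}^*$. The first step I would take is therefore simply to invoke this definition and record that the two value vectors in question coincide, regardless of which optimal policy of $m_{\text{VEP}}$ is selected.

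Before invoking it, there is one bookkeeping point I would make explicit. The policy $\pi_{m_{\text{VEP}}}^*$ is defined over the reduced feature space $\mathcal{F}_{\text{VEP}}$, whereas $V_{m^*}^*$ is a vector over the full space $\mathcal{F}$. I would recall from Section~\ref{sec:background} that in exactly this situation the value-equivalence relation is evaluated through the lifting map $\varphi$, so that $V_{m^*}^{\pi_{m_{\text{VEP}}}^*}$ is a well-defined element of $\mathbb{R}^{|\mathcal{F}|}$ and the equality asserted by the definition is a genuine componentwise identity over all $f \in \mathcal{F}$.

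With the vector equality in hand, the remaining step is immediate: the difference $V_{m^*}^* - V_{m^*}^{\pi_{m_{\text{VEP}}}^*}$ is the zero vector, so each of its components vanishes and its maximum absolute component, i.e.\ its $\infty$-norm, equals $0$. This yields $\left\Vert V_{m^*}^* - V_{m^*}^{\pi_{m_{\text{VEP}}}^*} \right\Vert_{\infty} = 0$, as claimed.

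I do not anticipate a genuine obstacle here: the theorem is a corollary that re-expresses the definition of value equivalence as a zero value loss in the $\infty$-norm. The only thing requiring any care is the feature-space mismatch between the partial policy and the full environment, and that is resolved entirely by the mapping $\varphi$ already assumed in the background. I would therefore present the argument as a short, two-line deduction rather than as a substantive proof.
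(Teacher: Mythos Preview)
Your proposal is correct and matches the paper's own proof essentially line for line: both simply invoke Definition~\ref{def:ve_partial_models} to obtain $V_{m^*}^{\pi_{m_{\text{VEP}}}^*} = V_{m^*}^*$ and then conclude that the $\infty$-norm of the difference is zero. Your additional remark about the lifting map $\varphi$ is a sensible clarification that the paper leaves implicit, but otherwise the arguments are identical.
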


\begin{proof}
This result directly follows from Defn.\ \ref{def:ve_partial_models}. Recall that, according to Defn.\ \ref{def:ve_partial_models}, we have:
\begin{equation}
    V_{m^*}^{\pi_{m_{\textit{VEP}}}^*} = V_{m^*}^* \quad \forall \pi_{m_{\textit{VEP}}}^*\in\Pi_{\textit{VEP}},
\end{equation}
which implies:
\begin{equation}
    \left\Vert V_{m^*}^* -V_{m^*}^{\pi_{m_{\textit{VEP}}}^*} \right\Vert_{\infty} = 0 \quad \forall \pi_{m_{\textit{VEP}}}^*\in\Pi_{\textit{VEP}}.
\end{equation}
\end{proof}

\begin{apptheorem}
\label{thm:planningloss_theorem_app}
Let $m_{\text{VEP}}\in \mathcal{M}_{\text{VEP}}$ be a VE partial model of the true environment $m^*\in\mathcal{M}$, and let $\tilde{m}_{\text{VEP}}\in \mathcal{M}_{\text{VEP}}$ be model that comprises of the reward function of $m_{\text{VEP}}$ and a transition distribution that is estimated from $n$ samples for each $(f,a)$ pair. Let $\Pi_{r_{\text{VEP}}} \equiv \{ \pi \mid \exists \; p_{\text{VEP}} \; \text{ s.t  $\pi$  is optimal in $(p_{\text{VEP}}, r_{\text{VEP}})$} \}$. Then, certainty-equivalence planning with $\tilde{m}_{\text{VEP}}$ has planning loss:
\begin{equation}
    \left\Vert V_{m_{\text{VEP}}}^{*} - V_{m_{\text{VEP}}}^{\pi_{\tilde{m}_{\text{VEP}}}^*} \right\Vert_{\infty} \leq \frac{2 R_{\max}}{(1-\gamma)^2} \sqrt{\frac{1}{2n} \log \frac{2 |\mathcal{F}_{\text{VEP}}| |\mathcal{A}| |\Pi_{r_{\text{VEP}}}|}{\delta}},
\end{equation}
with probability at least $1-\delta$.
\end{apptheorem}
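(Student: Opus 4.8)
The plan is to follow the certainty-equivalence analysis of \citet{jiang2015dependence}, specialized to the partial feature space $\mathcal{F}_{\text{VEP}}$. The first step is to reduce the planning loss to a uniform policy-evaluation error over $\Pi_{r_{\text{VEP}}}$. Writing $\pi^*$ for an optimal policy in $m_{\text{VEP}}$ and $\hat\pi \equiv \pi^*_{\tilde m_{\text{VEP}}}$ for the certainty-equivalence policy, I would insert and cancel the value of each policy evaluated in the opposite model to obtain the standard decomposition
\begin{equation}
V_{m_{\text{VEP}}}^* - V_{m_{\text{VEP}}}^{\hat\pi} \leq \left( V_{m_{\text{VEP}}}^{\pi^*} - V_{\tilde m_{\text{VEP}}}^{\pi^*}\right) + \left( V_{\tilde m_{\text{VEP}}}^{\hat\pi} - V_{m_{\text{VEP}}}^{\hat\pi}\right),
\end{equation}
where the cross term $V_{\tilde m_{\text{VEP}}}^{\pi^*} - V_{\tilde m_{\text{VEP}}}^{\hat\pi}\le 0$ is discarded because $\hat\pi$ is optimal in $\tilde m_{\text{VEP}}$. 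Since $m_{\text{VEP}}$ and $\tilde m_{\text{VEP}}$ share the reward $r_{\text{VEP}}$, both $\pi^*$ and $\hat\pi$ lie in $\Pi_{r_{\text{VEP}}}$, so (taking $\infty$-norms and using nonnegativity of the optimal gap) the right-hand side is at most $2\max_{\pi\in\Pi_{r_{\text{VEP}}}} \left\Vert V_{m_{\text{VEP}}}^{\pi} - V_{\tilde m_{\text{VEP}}}^{\pi}\right\Vert_\infty$.

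Next I would bound the per-policy evaluation error for a fixed $\pi$ with a simulation-lemma identity. As the models differ only in their kernels, subtracting the two fixed-policy Bellman equations and rearranging gives $V_{m_{\text{VEP}}}^{\pi} - V_{\tilde m_{\text{VEP}}}^{\pi} = (I - \gamma P_{\tilde m_{\text{VEP}}}^{\pi})^{-1}\gamma (P_{m_{\text{VEP}}}^{\pi} - P_{\tilde m_{\text{VEP}}}^{\pi}) V_{m_{\text{VEP}}}^{\pi}$. Because the resolvent $(I-\gamma P_{\tilde m_{\text{VEP}}}^{\pi})^{-1}$ is nonnegative with row sums $1/(1-\gamma)$, this yields
\begin{equation}
\left\Vert V_{m_{\text{VEP}}}^{\pi} - V_{\tilde m_{\text{VEP}}}^{\pi}\right\Vert_\infty \leq \frac{\gamma}{1-\gamma}\max_{f,a}\left| \sum_{f'} \left( p_{\text{VEP}}(f'|f,a) - \tilde p_{\text{VEP}}(f'|f,a)\right) V_{m_{\text{VEP}}}^{\pi}(f')\right|.
\end{equation}
The essential point is that $V_{m_{\text{VEP}}}^{\pi}$ on the right is the value of $\pi$ in the true model, hence independent of the samples defining $\tilde p_{\text{VEP}}$, and it takes values in $[0, R_{\max}/(1-\gamma)]$.

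For each fixed triple $(f,a,\pi)$ the inner sum is the gap between the true expectation of the bounded, data-independent function $V_{m_{\text{VEP}}}^{\pi}$ under $p_{\text{VEP}}(\cdot|f,a)$ and its empirical average over the $n$ i.i.d.\ samples for that pair, so Hoeffding's inequality applies with range $R_{\max}/(1-\gamma)$. I would then union-bound over all $f\in\mathcal{F}_{\text{VEP}}$, $a\in\mathcal{A}$, and $\pi\in\Pi_{r_{\text{VEP}}}$, set the total failure probability to $\delta$, and invert to conclude that with probability at least $1-\delta$ every such gap is at most $\frac{R_{\max}}{1-\gamma}\sqrt{\frac{1}{2n}\log\frac{2|\mathcal{F}_{\text{VEP}}||\mathcal{A}||\Pi_{r_{\text{VEP}}}|}{\delta}}$; chaining through the two displays and using $\gamma\le 1$ gives the stated bound.

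The hard part will be handling the data-dependence of $\hat\pi$: it is computed from the very samples that define $\tilde p_{\text{VEP}}$, so Hoeffding cannot be applied to $V_{\tilde m_{\text{VEP}}}^{\hat\pi}$ directly. The resolution, and the reason $|\Pi_{r_{\text{VEP}}}|$ appears inside the logarithm, is to restrict attention to $\Pi_{r_{\text{VEP}}}$ — the policies optimal in $(p, r_{\text{VEP}})$ for some kernel $p$, which necessarily contains $\hat\pi$ — and to union-bound over this finite class. This is also where the benefit of a minimal VE partial model enters, since shrinking $\mathcal{F}_{\text{VEP}}$ shrinks $|\Pi_{r_{\text{VEP}}}|$. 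The remaining steps are bookkeeping: verifying the resolvent bound and checking that the two-sided Hoeffding is what produces the factor of $2$ in the logarithm.
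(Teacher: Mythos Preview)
Your proposal is correct and follows essentially the same route as the paper: the same decomposition reducing planning loss to $2\max_{\pi\in\Pi_{r_{\text{VEP}}}}\lVert V_{m_{\text{VEP}}}^{\pi}-V_{\tilde m_{\text{VEP}}}^{\pi}\rVert_\infty$, the same reduction of policy-evaluation error to a one-step Bellman residual of the form $\langle(\tilde p_{\text{VEP}}-p_{\text{VEP}})(f,a,\cdot),V_{m_{\text{VEP}}}^{\pi}\rangle$, and the same Hoeffding-plus-union-bound over $(f,a,\pi)$. The only cosmetic difference is that you obtain the $\tfrac{1}{1-\gamma}$ factor via the resolvent identity $(I-\gamma P_{\tilde m_{\text{VEP}}}^{\pi})^{-1}$, whereas the paper proves the equivalent bound (its Lemma~\ref{thm:lemma2}) by an explicit policy-evaluation iteration argument.
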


\begin{proof}
For the proof of this theorem, we follow similar steps to the proof of Theorem 2 in \citet{jiang2015dependence}. Specifically, we prove Theorem \ref{thm:planningloss_theorem_app} with two lemmas: Lemma \ref{thm:lemma1} translates planning loss to value error, and Lemma \ref{thm:lemma2} relates value error to a Bellman-residual-like quantity that has a uniform deviation bound which depends on $|\Pi_{r_{\textit{VEP}}}|$.

\begin{applemma}
\label{thm:lemma1}
For any $\tilde{m}_{\text{VEP}} = (\tilde{p}_{\text{VEP}}, \tilde{r}_{\text{VEP}})$ with $\tilde{r}_{\text{VEP}}$ bounded by $[0, R_{\max}]$, 
\begin{equation}
    \left\Vert V_{m_{\text{VEP}}}^{*} - V_{m_{\text{VEP}}}^{\pi_{\tilde{m}_{\text{VEP}}}^*} \right\Vert_{\infty} \leq 2 \max_{\pi:\mathcal{F}\to \mathcal{A}} \left\Vert V_{m_{\text{VEP}}}^{\pi} - V_{\tilde{m}_{\text{VEP}}}^{\pi} \right\Vert_{\infty}.
    \label{eqn:first_eqn_lemma1}
\end{equation}
In particular, if $\tilde{r}_{\text{VEP}}=r_{\text{VEP}}$, we have
\begin{equation}
    \left\Vert V_{m_{\text{VEP}}}^{*} - V_{m_{\text{VEP}}}^{\pi_{\tilde{m}_{\text{VEP}}}^*} \right\Vert_{\infty} \leq 2 \max_{\pi\in \Pi_{r_{\text{VEP}}}} \left\Vert V_{m_{\text{VEP}}}^{\pi} - V_{\tilde{m}_{\text{VEP}}}^{\pi} \right\Vert_{\infty}.
    \label{eqn:second_eqn_lemma1}
\end{equation}
\end{applemma}
\begin{proof}
    $\forall f\in \mathcal{F}_{\textit{VEP}}$,
    \begin{align}
        V_{m_{\textit{VEP}}}^{\pi_{m_{\textit{VEP}}}^*}(f) - V_{m_{\textit{VEP}}}^{\pi_{\tilde{m}_{\textit{VEP}}}^*}(f) &= \left( V_{m_{\textit{VEP}}}^{\pi_{m_{\textit{VEP}}}^*}(f) - V_{\tilde{m}_{\textit{VEP}}}^{\pi_{m_{\textit{VEP}}}^*}(f) \right) - \left( V_{m_{\textit{VEP}}}^{\pi_{\tilde{m}_{\textit{VEP}}}^*}(f) - V_{\tilde{m}_{\textit{VEP}}}^{\pi_{\tilde{m}_{\textit{VEP}}}^*}(f) \right) \\ &+ \left( V_{\tilde{m}_{\textit{VEP}}}^{\pi_{m_{\textit{VEP}}}^*}(f) - V_{\tilde{m}_{\textit{VEP}}}^{\pi_{\tilde{m}_{\textit{VEP}}}^*}(f) \right) \nonumber \\
        &\leq \left( V_{m_{\textit{VEP}}}^{\pi_{m_{\textit{VEP}}}^*}(f) - V_{\tilde{m}_{\textit{VEP}}}^{\pi_{m_{\textit{VEP}}}^*}(f) \right) - \left( V_{m_{\textit{VEP}}}^{\pi_{\tilde{m}_{\textit{VEP}}}^*}(f) - V_{\tilde{m}_{\textit{VEP}}}^{\pi_{\tilde{m}_{\textit{VEP}}}^*}(f) \right) \\
        &\leq 2 \max_{\pi\in \left\{ \pi_{m_{\textit{VEP}}}^*, \pi_{\tilde{m}_{\textit{VEP}}}^* \right\} } |V_{m_{\textit{VEP}}}^{\pi}(f) - V_{\tilde{m}_{\textit{VEP}}}^{\pi}(f)|.
    \end{align}
    Eqn.\ \ref{eqn:first_eqn_lemma1} follows from taking the max over all feature vectors on both sides of the inequality and noticing that the set of all policies is a trivial superset of $\left\{ \pi_{m_{\textit{VEP}}}^*, \pi_{\tilde{m}_{\textit{VEP}}}^* \right\}$. If $\tilde{r}_{\textit{VEP}}=r_{\textit{VEP}}$, the bound can be tightened since $\left\{ \pi_{m_{\textit{VEP}}}^*, \pi_{\tilde{m}_{\textit{VEP}}}^* \right\} \in \Pi_{r_{\textit{VEP}}}$, and Eqn.\ \ref{eqn:second_eqn_lemma1} follows.
\end{proof}

\begin{applemma}
\label{thm:lemma2}
For any $\tilde{m}_{\text{VEP}} = (\tilde{p}_{\text{VEP}}, \tilde{r}_{\text{VEP}})$ with $\tilde{r}_{\text{VEP}}$ bounded by $[0, R_{\max}]$, $\forall\pi:\mathcal{F}_{\textit{VEP}}\to\mathcal{A}$,
\begin{equation}
    \left\Vert Q_{m_{\text{VEP}}}^{\pi} - Q_{\tilde{m}_{\text{VEP}}}^{\pi} \right\Vert_{\infty} \leq \frac{1}{1-\gamma} \max_{f\in\mathcal{F}_{\textit{VEP}}, a\in\mathcal{A}} \left| \tilde{r}_{\text{VEP}}(f,a) + \gamma \langle \tilde{p}_{\text{VEP}}(f,a,\cdot), V_{m_{\text{VEP}}}^{\pi} \rangle  - Q_{m_{\text{VEP}}}^{\pi}(f,a) \right|.
\end{equation}
\end{applemma}
\begin{proof}
    Given any policy $\pi$, define action value functions such that $Q_0, Q_1, \dots, Q_n, \dots$ such that $Q_0 = Q_{m_{\textit{VEP}}}^{\pi}$, and
    \begin{equation}
        Q_n (f,a) = \tilde{r}_{\textit{VEP}}(f,a) + \gamma \langle \tilde{p}_{\textit{VEP}}(f,a,\cdot), V_{n-1} \rangle,
    \end{equation}
    where $V_{n-1}(f) = Q_{n-1}(f,\pi(f))$. Notice that
    \begin{align}
        \left\Vert Q_n - Q_{n-1} \right\Vert_{\infty} &= \gamma \max_{f\in\mathcal{F}_{\textit{VEP}}, a\in\mathcal{A} } \left| \langle \tilde{p}_{\textit{VEP}}(f,a,\cdot), (V_{n-1} - V_{n-2}) \rangle \right| \\
        &\leq \gamma \max_{f\in\mathcal{F}_{\textit{VEP}}, a\in\mathcal{A} } || \tilde{p}_{\textit{VEP}}(f,a,\cdot) ||_1 || V_{n-1} - V_{n-2} ||_{\infty} \\
        &= \gamma || V_{n-1} - V_{n-2} ||_{\infty} \\
        &\leq \gamma || Q_{n-1} - Q_{n-2} ||_{\infty},
    \end{align}
    so
    \begin{align}
        || Q_{n} - Q_{0} ||_{\infty} &\leq \sum_{k=0}^{n-1} || Q_{k+1} - Q_{k} ||_{\infty} \\
        &\leq || Q_{1} - Q_{0} ||_{\infty} \sum_{k=0}^{n-1} \gamma^{k-1}.
    \end{align}
    Taking the limit of $n\to\infty$, $Q_n\to Q_{\tilde{m}_{\textit{VEP}}}^{\pi}$, and we have,
    \begin{align}
        \left\Vert Q_{\tilde{m}_{\textit{VEP}}}^{\pi} - Q_0 \right\Vert_{\infty} \leq \frac{1}{1-\gamma} || Q_{1} - Q_{0} ||_{\infty}.
    \end{align}
    This completes the proof, noticing that $Q_0 = Q_{m_{\textit{VEP}}}^{\pi}$, $V_0 = V_{m_{\textit{VEP}}}^{\pi}$, and $Q_1(f,a) = \tilde{r}_{\textit{VEP}}(f,a) + \gamma \langle \tilde{p}_{\textit{VEP}}(f,a,\cdot), V_{m_{\textit{VEP}}}^{\pi} \rangle$.
\end{proof}

From Eqn.\ \ref{eqn:second_eqn_lemma1} in Lemma \ref{thm:lemma1} and Lemma \ref{thm:lemma2}, we have
\begin{align}
    \left\Vert V_{m_{\textit{VEP}}}^{*} - V_{m_{\textit{VEP}}}^{\pi_{\tilde{m}_{\textit{VEP}}}^*} \right\Vert_{\infty} &\leq 2 \max_{\pi\in \Pi_{r_{\textit{VEP}}}} \left\Vert V_{m_{\textit{VEP}}}^{\pi} - V_{\tilde{m}_{\textit{VEP}}}^{\pi} \right\Vert_{\infty} \\ 
    &\leq 2 \max_{\pi\in \Pi_{r_{\textit{VEP}}}} \left\Vert Q_{m_{\textit{VEP}}}^{\pi} - Q_{\tilde{m}_{\textit{VEP}}}^{\pi} \right\Vert_{\infty} \\
    &= 2 \max_{f\in\mathcal{F}_{\textit{VEP}}, a\in\mathcal{A}, \pi\in \Pi_{r_{\textit{VEP}}} } \left| Q_{m_{\textit{VEP}}}^{\pi}(f,a) - Q_{\tilde{m}_{\textit{VEP}}}^{\pi}(f,a) \right|_{\infty} \\
    &\leq \frac{2}{1-\gamma} \max_{f\in\mathcal{F}_{\textit{VEP}}, a\in\mathcal{A}, \pi\in \Pi_{r_{\textit{VEP}}} } \left| \tilde{r}_{\textit{VEP}}(f,a) + \gamma \langle \tilde{p}_{\textit{VEP}}(f,a,\cdot), V_{m_{\textit{VEP}}}^{\pi} \rangle  - Q_{m_{\textit{VEP}}}^{\pi}(f,a) \right|.
\end{align}
For any particular $f$, $a$, $\pi$ tuple, according to Hoeffding's inequality, $\forall t > 0$, 
\begin{align}
    p \left( \left| \tilde{r}_{\textit{VEP}}(f,a) + \gamma \langle \tilde{p}_{\textit{VEP}}(f,a,\cdot), V_{m_{\textit{VEP}}}^{\pi} \rangle  - Q_{m_{\textit{VEP}}}^{\pi}(f,a) \right| > t \right) \leq 2 \exp \left(-\frac{2nt^2}{R_{\max}^2 / (1-\gamma)^2} \right),
    \label{eqn:first_eqn_theorem2}
\end{align}
as $\tilde{r}_{\textit{VEP}}(f,a) + \gamma \langle \tilde{p}_{\textit{VEP}}(f,a,\cdot), V_{m_{\textit{VEP}}}^{\pi} \rangle$ is the average of i.i.d.\ samples bounded in $[0, R_{\max}/(1-\gamma)]$, with mean $Q_{m_{\textit{VEP}}}^{\pi}(f,a)$. To obtain a uniform bound over all $(f,a,\pi)$ tuples, we set the right-hand side of Eqn. \ref{eqn:first_eqn_theorem2} to $\delta/|\mathcal{F}_{\textit{VEP}}| |\mathcal{A}| |\Pi_{r_{\textit{VEP}}}| $ and solve for $t$, and the theorem follows.
\end{proof}

\begin{apptheorem}
\label{thm:sample_complexity_app}
Let $m_{\text{VEP}}\in \mathcal{M}_{\text{VEP}}$ be a VE partial model of the true environment $m^*\in\mathcal{M}$. Let $\tilde{m}_{\text{VEP}}\in \mathcal{M}_{\text{VEP}}$ be the corresponding approximate VE partial model that has the same reward function as $m_{\text{VEP}}$, but whose transition distribution is estimated by $m$ calls to the generative model $m_{\text{VEP}}$, where
\begin{equation}
    m = \mathcal{O} \left( \frac{|\mathcal{F}_{\text{VEP}}| |\mathcal{A}|}{(1-\gamma)^4 \varepsilon^2}  \right),
\end{equation}
and let $Q_{\tilde{m}_{\text{VEP}}}^k$ be the value returned by Q-value iteration at the $k$th epoch. Then, with probability greater than $1-\delta$, the following holds for all $f\in\mathcal{F}_{\text{VEP}}$ and $a\in\mathcal{A}$:
\begin{equation}
    \left\Vert Q_{\tilde{m}_{\text{VEP}}}^k - Q_{m_{\text{VEP}}}^* \right\Vert_{\infty} \leq \varepsilon,
\end{equation}
where $k=\frac{\log (\varepsilon (1-\gamma))}{\log \gamma}$ and $Q_{m_{\text{VEP}}}^*$ is the optimal action value function in $m_{\text{VEP}}$.
\end{apptheorem}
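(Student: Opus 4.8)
The plan is to recognize that $m_{\text{VEP}}$ is itself a bona fide finite MDP over the reduced feature space $\mathcal{F}_{\text{VEP}}$, with action set $\mathcal{A}$, discount $\gamma$, and rewards bounded in $[0, R_{\max}]$, and then to invoke the generative-model sample-complexity machinery of \citet{azar2012sample} with $|\mathcal{F}_{\text{VEP}}|$ playing the role of the state-space cardinality. Concretely, I would first split the target error by the triangle inequality into an \emph{iteration} error and an \emph{estimation} error,
\begin{equation}
    \left\Vert Q_{\tilde{m}_{\text{VEP}}}^k - Q_{m_{\text{VEP}}}^* \right\Vert_{\infty} \leq \left\Vert Q_{\tilde{m}_{\text{VEP}}}^k - Q_{\tilde{m}_{\text{VEP}}}^* \right\Vert_{\infty} + \left\Vert Q_{\tilde{m}_{\text{VEP}}}^* - Q_{m_{\text{VEP}}}^* \right\Vert_{\infty},
\end{equation}
and then bound each of the two terms by a fraction of $\varepsilon$.

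For the iteration term, I would use that the Q-value-iteration (Bellman optimality) operator of $\tilde{m}_{\text{VEP}}$ is a $\gamma$-contraction in $\Vert\cdot\Vert_{\infty}$ with fixed point $Q_{\tilde{m}_{\text{VEP}}}^*$, so that after $k$ epochs the error contracts geometrically, as $\gamma^k$ times the initial gap, which is itself at most $R_{\max}/(1-\gamma)$. The prescribed choice $k=\log(\varepsilon(1-\gamma))/\log\gamma$ forces $\gamma^k \leq \varepsilon(1-\gamma)$, which drives this term below the desired portion of $\varepsilon$. For the estimation term, I would control $\Vert Q_{\tilde{m}_{\text{VEP}}}^* - Q_{m_{\text{VEP}}}^* \Vert_{\infty}$ by the deviation of the empirical transition kernel $\tilde{p}_{\text{VEP}}$ (built from $m$ generative samples per $(f,a)$ pair) from the true kernel $p_{\text{VEP}}$: following \citet{azar2012sample}, I would apply a Bernstein-type concentration to the one-step Bellman backup, exploit the variance structure of the value function rather than a crude Hoeffding bound, and take a union bound over all $|\mathcal{F}_{\text{VEP}}||\mathcal{A}|$ pairs. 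Inverting the resulting tail probability for $m$ then yields exactly the stated sample count $m=\mathcal{O}(|\mathcal{F}_{\text{VEP}}||\mathcal{A}|/((1-\gamma)^4\varepsilon^2))$ at confidence $1-\delta$, with the $\log(|\mathcal{F}_{\text{VEP}}||\mathcal{A}|/\delta)$ factor absorbed into the $\mathcal{O}(\cdot)$.

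It is worth noting that the value-equivalence property of $m_{\text{VEP}}$ plays no role in this argument: the theorem is purely a statement about how many samples suffice to solve the MDP $m_{\text{VEP}}$, and value-equivalence only enters (through Theorem \ref{thm:valueloss_theorem_app}) when one wishes to relate $Q_{m_{\text{VEP}}}^*$ back to the true environment $m^*$. The role of the partial model here is simply that $|\mathcal{F}_{\text{VEP}}|$ replaces the full $|\mathcal{F}|$, which is what makes the bound favourable and minimizes it precisely when $m_{\text{VEP}}$ is a minimal VE partial model.

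I expect the estimation term to be the main obstacle. Obtaining the sharp $(1-\gamma)^{-4}$ dependence — rather than the looser $(1-\gamma)^{-6}$ that a naive Hoeffding or $\ell_1$-concentration argument would produce — requires the variance-aware Bernstein analysis of \citet{azar2012sample}, together with care in propagating the per-pair deviation bound through the contraction to a bound on the gap between the two fixed points. The iteration term and the reduction to $\mathcal{F}_{\text{VEP}}$ are then routine by comparison.
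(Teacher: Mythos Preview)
Your decomposition and the treatment of the iteration term are exactly what the paper does: the same triangle-inequality split (the paper writes $Q_{\tilde{m}_{\text{VEP}}}^{\pi_{\tilde{m}_{\text{VEP}}}^*}$ for $Q_{\tilde{m}_{\text{VEP}}}^*$), the same $\gamma$-contraction argument yielding $\gamma^k/(1-\gamma)$, and the same choice of $k$. Your observation that value-equivalence plays no role in the argument is also correct and consistent with the paper's proof.

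Where you diverge is in the estimation term, and here you are overcomplicating things based on a misconception. The paper does \emph{not} use a Bernstein/variance-aware analysis; it applies plain Hoeffding to the scalar random variable $(\tilde{p}_{\text{VEP}}-p_{\text{VEP}})V_{m_{\text{VEP}}}^*(f,a)=\frac{1}{N}\sum_i V_{m_{\text{VEP}}}^*(f_i')-\mathbb{E}[V_{m_{\text{VEP}}}^*(f')]$, with each summand bounded by $1/(1-\gamma)$, takes a union bound over $(f,a)$, and then feeds the result through the one-step identity $\Vert Q_{\tilde{m}_{\text{VEP}}}^*-Q_{m_{\text{VEP}}}^*\Vert_\infty\le\frac{\gamma}{1-\gamma}\Vert(\tilde{p}_{\text{VEP}}-p_{\text{VEP}})V_{m_{\text{VEP}}}^*\Vert_\infty$. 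This already gives the $(1-\gamma)^{-4}$ rate in the theorem; Hoeffding applied this way does \emph{not} lose you two horizon factors. The Bernstein machinery of \citet{azar2012sample} would in fact sharpen the dependence to $(1-\gamma)^{-3}$, which is more than the statement asks for. So your proposed route would work, but the ``main obstacle'' you anticipate is not there: the elementary Hoeffding argument suffices, and that is what the paper uses.
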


\begin{proof}
Before starting the proof, let us first formally define generative models. A \emph{generative model}, or a \emph{sampler}, is a model that can provide us with samples $f'\sim p(f,a,\cdot)$ for all $f\in\mathcal{F}_{\textit{VEP}}$ and $a\in\mathcal{A}$. Now that we have defined generative models, let us assume we have access to a generative model $m_{\textit{VEP}}$ and suppose we call this model $N$ times at each $(f,a)$ pair. Let $\hat{p}$ be the transition distribution of our empirical model, defined as follows:
\begin{equation}
    \hat{p}(f,a,f') = \frac{\text{count}(f,a,f')}{N} = \frac{\sum_{i=1}^N \mathbb{I}_{f_i' = f'}}{N},
\end{equation}
where $f_i\sim p(f,a,\cdot)$, $\forall i \in \{1, \dots, N \}$, and $\text{count}(f,a,f')$ is the number of times the pair $(f,a)$ transitions to $f'$.

Moving on the main proof, by adding and subtracting $Q_{\tilde{m}_{\textit{VEP}}}^{\pi_{\tilde{m}_{\textit{VEP}}}^*}$, we can rewrite $Q_{\tilde{m}_{\textit{VEP}}}^k - Q_{m_{\textit{VEP}}}^*$ as follows:
\begin{align}
    Q_{\tilde{m}_{\textit{VEP}}}^k - Q_{m_{\textit{VEP}}}^* = \underbrace{Q_{\tilde{m}_{\textit{VEP}}}^k - Q_{\tilde{m}_{\textit{VEP}}}^{\pi_{\tilde{m}_{\textit{VEP}}}^*}}_{(i)} + \underbrace{Q_{\tilde{m}_{\textit{VEP}}}^{\pi_{\tilde{m}_{\textit{VEP}}}^*} - Q_{m_{\textit{VEP}}}^*}_{(ii)}
\end{align}

Bounding Term $(i)$:
\begin{align}
    \left\Vert Q_{\tilde{m}_{\textit{VEP}}}^k - Q_{\tilde{m}_{\textit{VEP}}}^{\pi_{\tilde{m}_{\textit{VEP}}}^*} \right\Vert_{\infty} &= \max_{f\in\mathcal{F}_{\textit{VEP}}, a\in\mathcal{A}} \left| r_{\textit{VEP}}(f,a) + \gamma \tilde{p}_{\textit{VEP}} V_{\tilde{m}_{\textit{VEP}}}^{k-1} (f,a) - \left( r_{\textit{VEP}}(f,a) + \gamma \tilde{p}_{\textit{VEP}} V_{\tilde{m}_{\textit{VEP}}}^{\pi_{\tilde{m}_{\textit{VEP}}}^*}(f,a) \right) \right| \\
    &= \max_{f\in\mathcal{F}_{\textit{VEP}}, a\in\mathcal{A}} \gamma \left| \tilde{p}_{\textit{VEP}} \left( V_{\tilde{m}_{\textit{VEP}}}^{k-1} - V_{\tilde{m}_{\textit{VEP}}}^{\pi_{\tilde{m}_{\textit{VEP}}}^*} \right) (f,a) \right| \\
    &\leq \gamma \left\Vert V_{\tilde{m}_{\textit{VEP}}}^{k-1} - V_{\tilde{m}_{\textit{VEP}}}^{\pi_{\tilde{m}_{\textit{VEP}}}^*} \right\Vert_{\infty} \\
    &\leq \gamma \max_{f\in\mathcal{F}_{\textit{VEP}}} \left| \max_{a\in\mathcal{A}} Q_{\tilde{m}_{\textit{VEP}}}^{k-1}(f,a) - \max_{a\in\mathcal{A}} Q_{\tilde{m}_{\textit{VEP}}}^{\pi_{\tilde{m}_{\textit{VEP}}}^*} (f,a) \right| \\
    &\leq \gamma \max_{f\in\mathcal{F}_{\textit{VEP}}, a\in\mathcal{A}} \left| Q_{\tilde{m}_{\textit{VEP}}}^{k-1} (f,a) - Q_{\tilde{m}_{\textit{VEP}}}^{\pi_{\tilde{m}_{\textit{VEP}}}^*} (f,a) \right| \\
    &= \gamma \left\Vert Q_{\tilde{m}_{\textit{VEP}}}^{k-1} - Q_{\tilde{m}_{\textit{VEP}}}^{\pi_{\tilde{m}_{\textit{VEP}}}^*} \right\Vert_{\infty}.
\end{align}
Unrolling the last inequality $k$ times, we obtain:
\begin{align}
    \left\Vert Q_{\tilde{m}_{\textit{VEP}}}^k - Q_{\tilde{m}_{\textit{VEP}}}^{\pi_{\tilde{m}_{\textit{VEP}}}^*} \right\Vert_{\infty} &\leq \gamma^k || Q_{\tilde{m}_{\textit{VEP}}}^{0} - Q_{\tilde{m}_{\textit{VEP}}}^{\pi_{\tilde{m}_{\textit{VEP}}}^*} || \\
    &\leq \frac{\gamma^k}{1-\gamma}.
\end{align}

Bounding Term $(ii)$:
\begin{align}
    \left( Q_{\tilde{m}_{\textit{VEP}}}^{\pi_{\tilde{m}_{\textit{VEP}}}^*} - Q_{m_{\textit{VEP}}}^* \right) (f,a) &= \gamma \tilde{p}_{\textit{VEP}} V_{\tilde{m}_{\textit{VEP}}}^{\pi_{\tilde{m}_{\textit{VEP}}}^*}(f,a) - \gamma p_{\textit{VEP}} V_{m_{\textit{VEP}}}^*(f,a) \\
    &= \gamma \left( \tilde{p}_{\textit{VEP}} - p_{\textit{VEP}} \right) V_{m_{\textit{VEP}}}^*(f,a) + \gamma \tilde{p}_{\textit{VEP}} \left( V_{\tilde{m}_{\textit{VEP}}}^{\pi_{\tilde{m}_{\textit{VEP}}}^*} - V_{m_{\textit{VEP}}}^* \right)(f,a) \\
    &= \gamma \left( \tilde{p}_{\textit{VEP}} - p_{\textit{VEP}} \right) V_{m_{\textit{VEP}}}^*(f,a) \\ &+ \gamma \sum_{f'\in\mathcal{F}} \tilde{p}_{\textit{VEP}}(f,a,f') ( \max_{a'\in\mathcal{A}} Q_{\tilde{m}_{\textit{VEP}}}^{\pi_{\tilde{m}_{\textit{VEP}}}^*}(f',a') - \max_{a'\in\mathcal{A}} Q_{m_{\textit{VEP}}}^*(f',a') ). \nonumber
\end{align}
Therefore,
\begin{align}
    \left\Vert Q_{\tilde{m}_{\textit{VEP}}}^{\pi_{\tilde{m}_{\textit{VEP}}}^*} - Q_{m_{\textit{VEP}}}^* \right\Vert_{\infty} &\leq \gamma \max_{f\in\mathcal{F}_{\textit{VEP}}, a\in\mathcal{A}} \left| \left( \tilde{p}_{\textit{VEP}} - p_{\textit{VEP}} \right) V_{m_{\textit{VEP}}}^*(f,a) \right| + \gamma \left\Vert Q_{\tilde{m}_{\textit{VEP}}}^{\pi_{\tilde{m}_{\textit{VEP}}}^*} - Q_{m_{\textit{VEP}}}^* \right\Vert_{\infty} \\
    &\leq \frac{\gamma}{1-\gamma} \left\Vert \left( \tilde{p}_{\textit{VEP}} - p_{\textit{VEP}} \right) V_{m_{\textit{VEP}}}^* \right\Vert_{\infty}.
\end{align}

Fix a $(f,a)$ pair:
\begin{align}
    \left( \tilde{p}_{\textit{VEP}} - p_{\textit{VEP}} \right) V_{m_{\textit{VEP}}}^* &= \frac{1}{N} \sum_{i=1}^N V_{m_{\textit{VEP}}}^*(f_i') - E_{f'\in p_{\textit{VEP}}(f,a,f')} \left[ V_{m_{\textit{VEP}}}^*(f') \right] \\
    &= \frac{1}{N} (S_N - E[S_N]),
\end{align}
where $S_N = \sum_{i=1}^N X_i$ and $X_i = V_{m_{\textit{VEP}}}^*(f_i')$. $X_i$ are random independent variables and $|X_i| \leq \frac{1}{1-\gamma}$. Applying Hoeffding's inequality, we obtain $\forall t > 0$:
\begin{align}
    p \left( \frac{1}{N} (S_N - E[S_N]) \geq t \right) &\leq 2 \exp \left( \frac{-N^2 t^2}{N / (1-\gamma)^2} \right) \\
    &= 2 \exp \left( -N t^2 (1-\gamma)^2 \right)
\end{align}

\begin{align}
    p \left( \max_{f\in\mathcal{F}_{\textit{VEP}}, a\in\mathcal{A}} \left| \left( \tilde{p}_{\textit{VEP}} - p_{\textit{VEP}} \right) V_{m_{\textit{VEP}}}^*(f,a) \right| \geq t \right) &= p \left( \exists (f,a) \text{ s.t. } \left| \left( \tilde{p}_{\textit{VEP}} - p_{\textit{VEP}} \right) V_{m_{\textit{VEP}}}^*(f,a) \right| \geq t \right) \\
    &\leq \sum_{f\in\mathcal{F}, a\in\mathcal{A}} p \left( \left| \left( \tilde{p}_{\textit{VEP}} - p_{\textit{VEP}} \right) V_{m_{\textit{VEP}}}^*(f,a) \right| \geq t \right) \tag{Union Bound} \\
    &= 2 |\mathcal{F}_{\textit{VEP}}| |\mathcal{A}| \exp \left( -N t^2 (1-\gamma)^2 \right)
\end{align}

Let the failure probability $\delta>0$. Solve for $t$,
\begin{align}
    2 |\mathcal{F}_{\textit{VEP}}| |\mathcal{A}| \exp \left( -N t^2 (1-\gamma)^2 \right) &= \delta \\
    &\Rightarrow t = \frac{1}{1-\gamma} \sqrt{\frac{\log (2 |\mathcal{F}_{\textit{VEP}}| |\mathcal{A}| / \delta)}{N}}.
\end{align}
With probability at least $1-\delta$,
\begin{align}
    \left\Vert Q_{\tilde{m}_{\textit{VEP}}}^{\pi_{\tilde{m}_{\textit{VEP}}}^*} - Q_{m_{\textit{VEP}}}^* \right\Vert_{\infty} &\leq \frac{\gamma}{1-\gamma} \max_{f\in\mathcal{F}_{\textit{VEP}}, a\in\mathcal{A}} \left\Vert  \left( \tilde{p}_{\textit{VEP}} - p_{\textit{VEP}} \right) V_{m_{\textit{VEP}}}^* \right\Vert_{\infty} \\
    &\leq \frac{\gamma}{(1-\gamma)^2} \sqrt{\frac{\log (2 |\mathcal{F}_{\textit{VEP}}| |\mathcal{A}| / \delta)}{N}}.
\end{align}
We conclude
\begin{align}
    \left\Vert Q_{\tilde{m}_{\textit{VEP}}}^k - Q_{m_{\textit{VEP}}}^* \right\Vert_{\infty} &\leq \left\Vert Q_{\tilde{m}_{\textit{VEP}}}^k - Q_{\tilde{m}_{\textit{VEP}}}^{\pi_{\tilde{m}_{\textit{VEP}}}^*} \right\Vert_{\infty} + \left\Vert Q_{\tilde{m}_{\textit{VEP}}}^{\pi_{\tilde{m}_{\textit{VEP}}}^*} - Q_{m_{\textit{VEP}}}^* \right\Vert_{\infty} \\
    &\leq \frac{\gamma^k}{(1-\gamma)} + \frac{\gamma}{(1-\gamma)^2} \sqrt{\frac{\log (2 |\mathcal{F}_{\textit{VEP}}| |\mathcal{A}| / \delta)}{N}}.
\end{align}
By choosing $$k=\frac{\log(\varepsilon(1-\gamma)/2)}{\log \gamma}$$ and $$N = \frac{4\gamma^2}{(1-\gamma)^4 \varepsilon^2} \log (2 |\mathcal{F}_{\textit{VEP}}| |\mathcal{A}| / \delta),$$ we get $\left\Vert Q_{\tilde{m}_{\textit{VEP}}}^k - Q_{m_{\textit{VEP}}}^* \right\Vert_{\infty} \leq \varepsilon/2 + \varepsilon/2 = \varepsilon$. Therefore, the total number of samples (calls to the generative model) to get an $\varepsilon$ estimation of the optimal $Q$-value is:
\begin{align}
    N |\mathcal{F}_{\textit{VEP}}| |\mathcal{A}| = \mathcal{O} \left( \frac{|\mathcal{F}_{\textit{VEP}}| |\mathcal{A}|}{(1-\gamma)^4 \varepsilon^2} \right).
\end{align}
\end{proof}

\section{Algorithm Pseudocodes}
\label{appsec:alg_pseudo}

\begin{algorithm}[H]
    \centering
    \caption{Model-Based Q-Value Iteration} \label{alg:alg_QVI}
    \begin{algorithmic}[1]
        \State \text{Initialize the parameters $V^0 = 0$ and $Q^0 = 0$}
        \For{\text{episode $k = 1, \dots, K$}}
        \For{\text{$(f,a) \in \mathcal{F}\times\mathcal{A}$}}
        \State \text{$Q^k(f,a) = r(f,a) + \gamma \tilde{p} V^{k-1}(f,a)$}
        \State \text{$V^k(f) = \max_{a\in\mathcal{A}} Q^k (f,a)$}
        \EndFor
        \EndFor
        \State \textbf{Return} $Q^K$
    \end{algorithmic}
\end{algorithm}

\begin{algorithm}[H]
    \centering
    \caption{The Straight-Forward Decision-Time Planning Algorithm of \citet{zhao2021consciousness}} \label{alg:alg_DT}
    \begin{algorithmic}[1]
        \State \text{Initialize the parameters $\theta$, $\eta$ \& $\omega$ of $\phi_{\theta}:\mathcal{S}\to \mathcal{F}$,  $Q_{\eta}:\mathcal{F}\times\mathcal{A}\to \mathbb{R}$ \& $m_{\omega} = (p_{\omega}, r_{\omega})$}
        \State \text{Initialize the replay buffer $\mathcal{B}  \gets \{ \}$}
        \State $N_{ple}\gets \text{number of episodes to perform planning and learning}$
        \State $N_{rbt}\gets \text{number of samples that the replay buffer must hold to perform planning and learning}$
        \State $n_s\gets \text{number of time steps to perform search}$
        \State $n_{bs}\gets \text{number of samples to sample from the replay buffer}$
        \State $h\gets \text{search heuristic}$
        \State $T\gets \text{replay buffer sampling strategy}$
        \State $i\gets 0$
        \While{$i<N_{ple}$}
        \State $S\gets \text{reset environment}$
        \While{\text{not done}}
        \State \text{$A\gets \epsilon\text{-greedy}(\text{tree\_search\_with\_bootstrapping}(\phi_{\theta}(S), m_{\omega}, Q_{\eta}, n_s, h))$}
        \State \text{$R, S', \text{done} \gets \text{environment($A$)}$}
        \State \text{$\mathcal{B} \gets \mathcal{B} + \{(S,A,R,S', \text{done})\} $}
        \If{$|\mathcal{B}| \geq N_{rbt}$}
        \State $\mathcal{D} \gets \text{sample\_batch}(\mathcal{B}, n_{bs}, T)$
        \State Update $\phi_{\theta}$, $Q_{\eta}$ \& $m_{\omega}$ with $\mathcal{D}$
        \EndIf
        \State $S\gets S'$
        \EndWhile
        \State $i\gets i+1$
        \EndWhile
        \State \textbf{Return} $\phi_{\theta}$, $Q_{\eta}$ \& $m_{\omega}$
    \end{algorithmic}
\end{algorithm}

Note that Alg.\ \ref{alg:alg_DT}\footnote{See \url{https://github.com/mila-iqia/Conscious-Planning} for the publicly available code.} does not employ the ``bottleneck mechanism'' introduced in \citet{zhao2021consciousness}.

\section{Experimental Details}
\label{appsec:exp_details}

In this section, we provide the implementation details of the environments that are used in Sec.\ \ref{sec:experiments} together with the details of the models that are used in the scalability experiments of Sec. \ref{sec:scalability_exps}. We also provide the implementation details of the straightforward decision-time planning algorithm of \citet{zhao2021consciousness} that was used in Sec.\ \ref{sec:robustness_exps}. 

\subsection{Implementation Details of the SW Environment}

As stated in Sec.\ \ref{sec:ve_partial_models}, in the Squirrel's World (SW) environment the squirrel's job is to navigate from cell E1 (its initial state) to cell E16 (the terminal state) to pickup the nut without getting caught by the hawk that flies back and forth horizontally along row C. At each time step, the squirrel receives as input an 5$\times$16 image of the current state of the environment and then, through the use of a pre-defined state encoder, transforms this image into a feature vector that contains information regarding all aspects of the current state of the environment, i.e., the feature vector contains information on the  current position of the squirrel and the cloud, the current wind direction in rows A and B, the current position and direction of the hawk and the current weather condition. Based on this, the squirrel selects an action that either moves it to the left or right cell, or keeps it position fixed (except if the agent is trying to move out of the boundaries of the world in which case its position is kept constant). If the squirrel gets caught by the hawk or if it is out of time, it receives a reward of $0$ and the episode terminates, and if the squirrel successfully navigates to the nut within the given time limit, it gets a reward of $+10$ and the episode terminates. The agent-environment interaction lasts for $100$ time steps, after which the agent receives a done signal, marking the end of the episode.

\subsection{Implementation Details of the 2RDO Environments}
\label{appsec:2rdo_details}

\begin{figure*}
    \centering
    \begin{subfigure}{0.195\textwidth}
        \centering
        \includegraphics[width=0.75\textwidth]{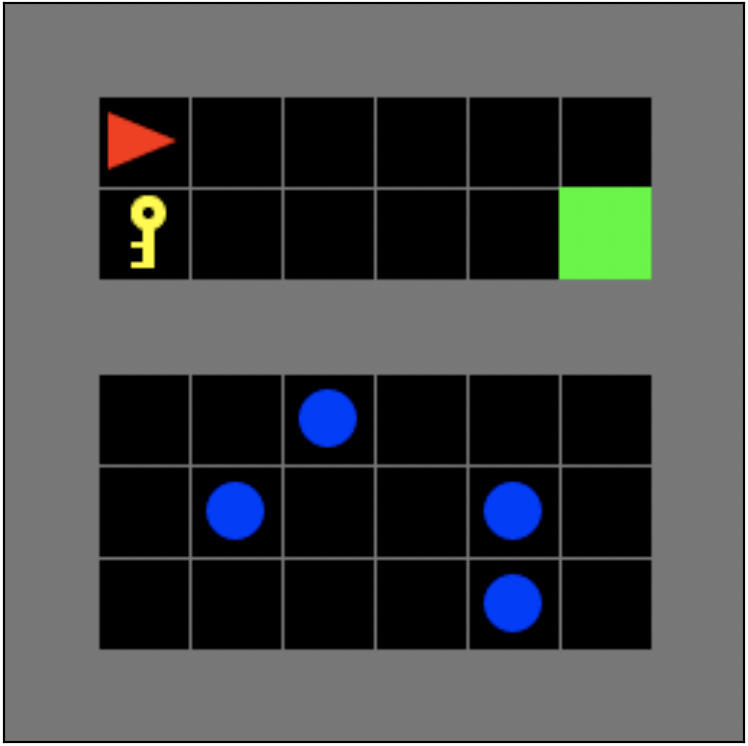}
        \caption{\small 8x8 BlueBalls-K}
    \end{subfigure}
    \begin{subfigure}{0.195\textwidth}
        \centering
        \includegraphics[width=0.75\textwidth]{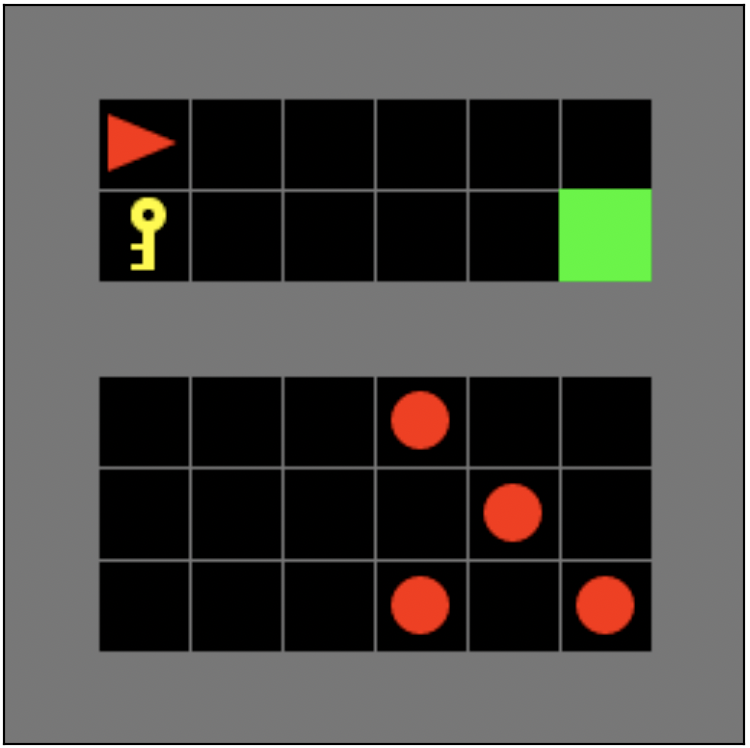}
        \caption{\small 8x8 RedBalls-K}
    \end{subfigure}
    \begin{subfigure}{0.195\textwidth}
        \centering
        \includegraphics[width=0.75\textwidth]{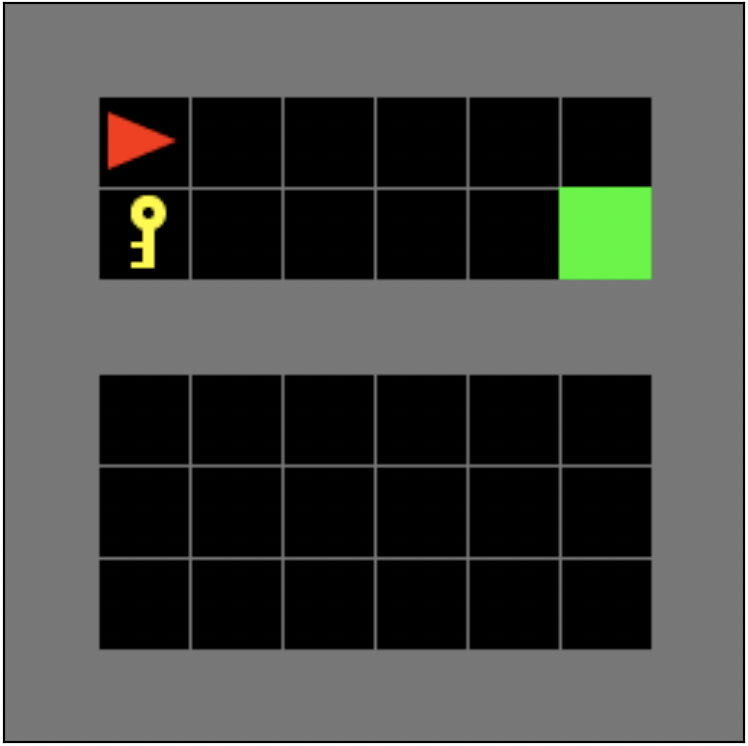}
        \caption{\small 8x8 NoObstacles-K}
    \end{subfigure}
    \begin{subfigure}{0.195\textwidth}
        \centering
        \includegraphics[width=0.75\textwidth]{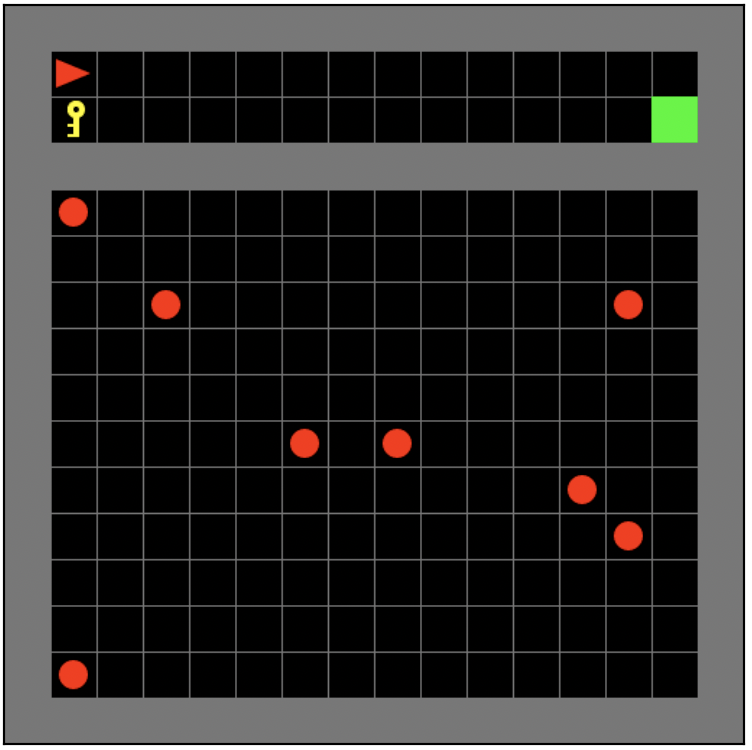}
        \caption{\small 16x16 RedBalls-K}
    \end{subfigure}
    \begin{subfigure}{0.195\textwidth}
        \centering
        \includegraphics[width=0.75\textwidth]{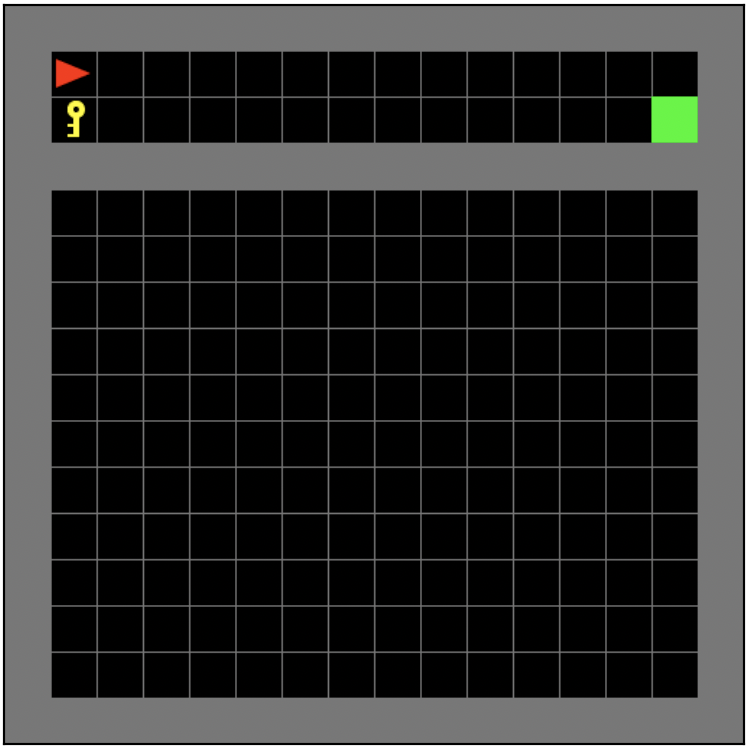}
        \caption{\small 16x16 NoObstacles-K}
    \end{subfigure}
    \caption{Variations of the 8x8 2RDO environments with a key. In these environments, the agent only receives a reward of $+1$ at the goal cell if it picks up the key along the way. Otherwise, navigating to the goal straightforwardly does not result in a termination of the episode.}
    \vspace{-0.25cm}
    \label{fig:minigrid_envs_key}
\end{figure*}

\textbf{Regular 2RDO Environments.} In the 2RDO environments, the agent, depicted by the red triangle, spawns in top-left of the top room and has to navigate to the green goal cell located in the bottom-right of the same room, regardless of the gaseous motions of the obstacles in the bottom room. Here, at each time step, the obstacles move to one of its neighboring cells (except if it is trying to move out of the boundaries of the world in which case its position is kept constant). At each time step, the agent receives an image of the current state of the grid and then, through the use of a learned state encoder, transforms this image into a feature vector. Based on this, the agent selects an action that either turns it left or right, or moves it forward (except if the agent is trying to move out of the boundaries of the world in which case its position is kept constant). If the agent successfully navigates to the goal cell within the given time limit, it receives a reward of $+1$ and the episode terminates. The agent-environment interaction lasts for $50$ time steps for the 8x8 environments and $100$ time steps for the 16x16 environments, after which the agent receives a done signal, marking the end of the episode.

\textbf{2RDO Environments with Keys.} In the 2RDO environments with keys, the agent spawns again in top-left of the top room and has to navigate to the green goal cell located in the bottom-right of the same room, regardless of the gaseous motions of the obstacles in the bottom room. Here, at each time step, the obstacles move to one of its neighboring cells. At each time step, the agent receives an image of the current state of the grid and then, through the use of a learned state encoder, transforms this image into a feature vector. Based on this, the agent selects an action that either turns it left or right, moves it forward, or picks up the object in front of it (note that, instead of three, there are now four different actions). If the agent picks up the key and successfully navigates to the goal cell within the given time limit, it receives a reward of $+1$ and the episode terminates. Otherwise, the agent receives a reward of $0$ upon navigating to the goal cell. The agent-environment interaction lasts for $100$ time steps for the 8x8 environments and $400$ time steps for the 16x16 environments, after which the agent receives a done signal, marking the end of the episode.

\subsection{Implementation Details of the Procgen Environments}

\begin{figure}
    \centering
    \begin{subfigure}{0.225\textwidth}
        \centering
        \includegraphics[width=0.75\textwidth]{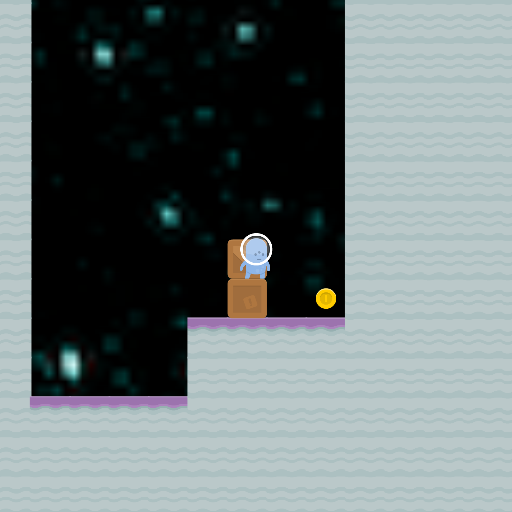}
        \caption{\small CoinRun}
    \end{subfigure}
    \begin{subfigure}{0.225\textwidth}
        \centering
        \includegraphics[width=0.75\textwidth]{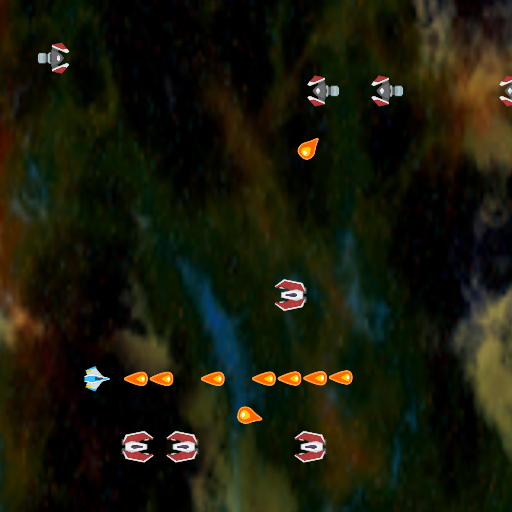}
        \caption{\small StarPilot}
    \end{subfigure}
    \begin{subfigure}{0.225\textwidth}
        \centering
        \includegraphics[width=0.75\textwidth]{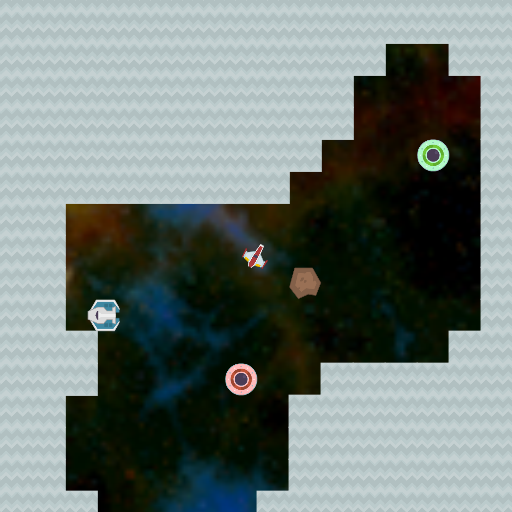}
        \caption{\small CaveFlyer}
    \end{subfigure}
    \begin{subfigure}{0.225\textwidth}
        \centering
        \includegraphics[width=0.75\textwidth]{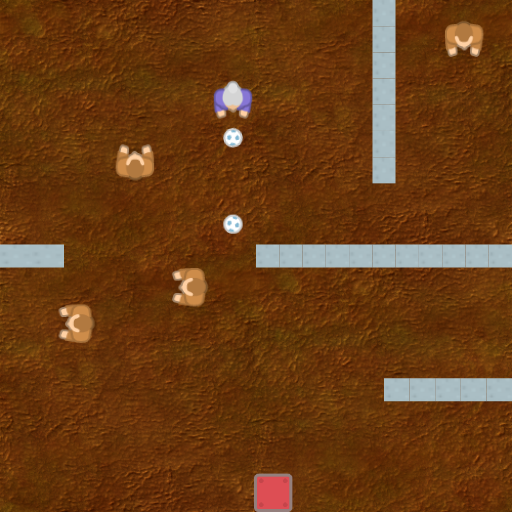}
        \caption{\small DodgeBall}
    \end{subfigure}
    \caption{The Procgen environments that are used in this study: (a) CoinRun, (b) StarPilot, (c) CaveFlyer and (d) DodgeBall. We refer the reader to the benchmark website (\url{https://openai.com/blog/procgen-benchmark/}) for more visualizations.}
    \label{fig:procgen_envs}
\end{figure}

In this study we have used the easy and regular versions of the CoinRun, StarPilot, CaveFlyer and DodgeBall environments (see Fig.\ \ref{fig:procgen_envs}). We refer the reader to the study of \citet{cobbe2020leveraging} to learn more about the details of these environments.

\subsection{Details of the Hand-Engineered Models}

The details of what the models in Sec.\ \ref{sec:scalability_exps} model can be found in Table \ref{tab:models}.

\begin{table}[h!]
    \caption{Several non-VE and VE partial models of the SW environment.}
    \vspace{-0.5cm}
    \centering
    \begin{tabular}{l|l}
        \hline
        $m_1$ & squirrel position, cloud position  \\
        $m_2$ & squirrel position, cloud position, wind direction \\
        $m_3$ & squirrel position, cloud position, wind direction, hawk position \\
        $m_4$ & squirrel position, hawk position, hawk direction \\
        $m_5$ & squirrel position, hawk position, hawk direction, cloud position \\
        $m_6$ & squirrel position, hawk position, hawk direction, cloud position, wind direction \\
        $m_7$ & squirrel position, hawk position, hawk direction, cloud position, wind direction, weather \\
        \hline
    \end{tabular}
    \label{tab:models}
\end{table}

\subsection{Details and Hyperparameters of the Decision-Time Planning Algorithm}

The details and hyperparameters of the straightforward decision-time of \citet{zhao2021consciousness} that we have used can be found in Table \ref{tab:DT_impdetails_2rdo} and \ref{tab:DT_impdetails_procgen}.

\begin{table}[h!]
    \caption{Details and hyperparameters of Alg.\ \ref{alg:alg_DT} for the 2RDO environments.}
    \vspace{-0.5cm}
    \centering
    \begin{tabular}{l|l}
        \hline
        $\phi_{\theta}$ & A regular neural network feature extractor \\
        $Q_{\eta}$ & A regular neural network \\
        $m_{\omega}$ & A regular neural network \\
        $N_{ple}$ & $200$k \\
        $N_{rbt}$ & $50$k \\
        $n_s$ & $20$ \\
        $n_{bs}$ & $128$ \\
        $h$ & best-first search (training), random search (evaluation) \\
        $T$ & random sampling \\
        $\epsilon$ & linearly decays from $1.0$ to $0.0$ over the first 1M time steps \\
        \hline
    \end{tabular}
    \label{tab:DT_impdetails_2rdo}
\end{table}

\begin{table}[h!]
    \caption{Details and hyperparameters of Alg.\ \ref{alg:alg_DT} for the Procgen environments.}
    \vspace{-0.5cm}
    \centering
    \begin{tabular}{l|l}
        \hline
        $\phi_{\theta}$ & A convolutional neural network feature extractor \\
        $Q_{\eta}$ & A regular neural network \\
        $m_{\omega}$ & A regular neural network \\
        $N_{ple}$ & $2$M \\
        $N_{rbt}$ & $50$k \\
        $n_s$ & $50$ \\
        $n_{bs}$ & $128$ \\
        $h$ & best-first search (training), random search (evaluation) \\
        $T$ & random sampling \\
        $\epsilon$ & linearly decays from $1.0$ to $0.0$ over the first 1M time steps \\
        \hline
    \end{tabular}
    \label{tab:DT_impdetails_procgen}
\end{table}

For more details (such as the NN architectures, replay buffer sizes, learning rates, exact details of the tree search, \dots), we refer the reader to the publicly available code and the supplementary material of \citet{zhao2021consciousness}.

\subsection{Details of the Encoder Shaping Procedure During Training}

In Sec.\ \ref{sec:robustness_exps}, we argued that one of the important inductive biases that is likely to guide the agent in coming up with only the relevant features of the environment is to only let the value estimator shape the encoder and to prevent the model from doing so. This is pictorially depicted in Fig.\ \ref{fig:deep_learning_arch}.

\begin{figure}[h!]
    \centering
    \includegraphics[width=0.95\textwidth]{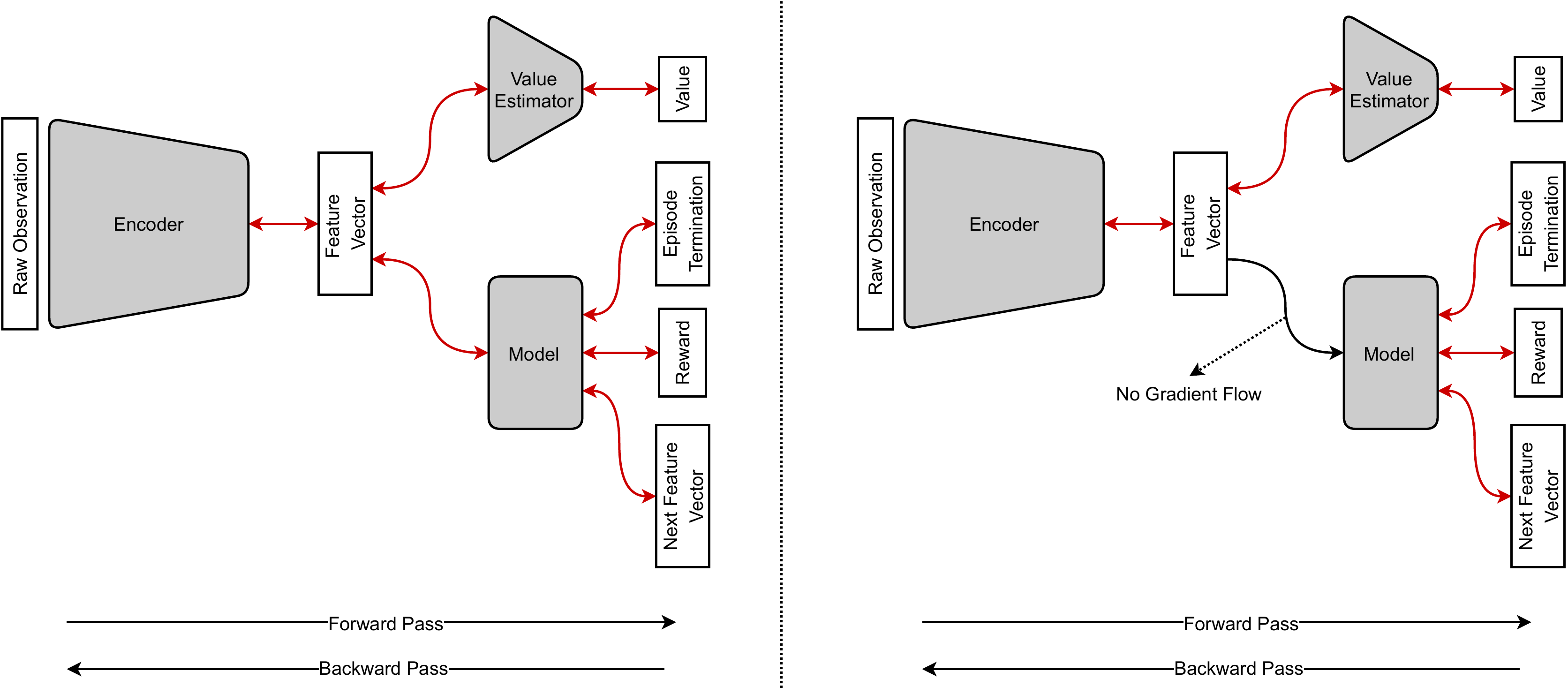}
    \vspace{-0.25cm}
    \caption{A pictorial representation of how the agent can be trained so that it can come up with relevant features of the environment. (Right) The regular way of training, (Left) the way it can be done.} 
    \label{fig:deep_learning_arch}
\end{figure}

\section{Additional Experimental Results with Procgen Environments}
\label{appsec:add_expresults}

For our experiments with the Procgen environments, we again compare three different agents: (i) a regular agent $A_{\textit{REG}}$, that was trained on 200 levels for the easy modes and 500 levels for the regular modes and whose encoder was jointly shaped by its value estimator and model, (ii) an agent, $A_{\textit{VES}}$, that was again trained on 200 levels for the easy modes and 500 levels for the regular modes, but whose encoder was only shaped by its value estimator, and (iii) an agent, $A_{\textit{VES+ME}}$, that was trained on 100,000 levels for both the easy and regular modes and whose encoder was only shaped by its value estimator. Note that we have used the recommended 200 and 500 levels for training the $A_{\textit{REG}}$ and $A_{\textit{VES}}$ agents as training on a single environment for the Procgen benchmark is demonstrated to fail in all cases \citep{cobbe2020leveraging}. We have also used 100,000 levels for training the $A_{\textit{VES+ME}}$ agent to demonstrate the effectiveness of training on multiple environments on inducing models that dispaly the behavior of minimal VE partial models. Following the protocol in \citet{cobbe2020leveraging}, we compare these agents on a full distribution of levels and report it as the test performance. Finally, we have used the same bar plot in \citet{alver2020a} for reporting the performances in Fig.\ \ref{fig:transfer_plots_procgen}.

\end{document}